\documentclass[onecolumn,11pt,letter]{article}
\usepackage{mathptmx}
\usepackage{epsfig}
\usepackage{epstopdf}
\usepackage{graphicx}
\usepackage{amsfonts}
\usepackage{amsmath}
\usepackage{amsthm}
\usepackage{proof}
\usepackage{latexsym}
\usepackage{amssymb}
\usepackage{color}
\usepackage[normalem]{ulem}
\usepackage{comment}
\usepackage[ruled]{algorithm2e}
\usepackage[margin=1in]{geometry}

\newtheorem{theorem}{Theorem}[section]
\newtheorem{lemma}[theorem]{Lemma}

\newtheorem{definition}[theorem]{Definition}


\pagestyle{plain}


{\makeatletter
 \gdef\xxxmark{%
   \expandafter\ifx\csname @mpargs\endcsname\relax 
     \expandafter\ifx\csname @captype\endcsname\relax 
       \marginpar{xxx}
     \else
       xxx 
     \fi
   \else
     xxx 
   \fi}
 \gdef\xxx{\@ifnextchar[\xxx@lab\xxx@nolab}
 \long\gdef\xxx@lab[#1]#2{{\bf [\xxxmark #2 ---{\sc #1}]}}
 \long\gdef\xxx@nolab#1{{\bf [\xxxmark #1]}}
 \long\gdef\xxx@lab[#1]#2{}\long\gdef\xxx@nolab#1{}%
}

\bibliographystyle{elsart-num-sort}

\begin{document}

\newcommand{\Mb}[1]{\mathbf{#1}}
\newcommand{\Mc}[1]{\mathcal{#1}}

\title{Parameterized Complexity of Problems in Coalitional Resource Games\thanks{Supported in part by Google Faculty Research Award, ONR Young
Investigator Award and NSF CAREER award.}}

\author{Rajesh Chitnis\thanks{Department of Computer Science , University of Maryland at College Park, USA, email: rchitnis@cs.umd.edu}
\and MohammadTaghi Hajiaghayi\thanks{Department of Computer Science , University of Maryland at College Park, USA. email:
hajiagha@cs.umd.edu} \and Vahid Liaghat\thanks{Department of Computer Science , University of Maryland at College Park, USA.
email: vliaghat@cs.umd.edu}}

\maketitle

\begin{abstract}
Coalition formation is a key topic in multi-agent systems. Coalitions enable agents to achieve goals that they may not have
been able to achieve on their own. Previous work has shown problems in coalition games to be computationally hard. Wooldridge
and Dunne (Artificial Intelligence 2006) studied the classical computational complexity of several natural decision problems
in Coalitional Resource Games (CRG) - games in which each agent is endowed with a set of resources and coalitions can bring
about a set of goals if they are collectively endowed with the necessary amount of resources.
The input of coalitional resource games bundles together several elements, e.g., the agent set $Ag$, the goal set $G$, the
resource set $R$, etc. Shrot, Aumann and Kraus (AAMAS 2009) examine coalition formation problems in the CRG model using the
theory of Parameterized  Complexity. Their refined analysis shows that not all parts of input act equal - some instances of
the problem are indeed tractable while others still remain intractable.

We answer an important question left open by Shrot, Aumann and Kraus by showing that the SC Problem (checking whether a
Coalition is Successful) is W[1]-hard when parameterized by the size of the coalition. Then via a single theme of reduction
from SC, we are able to show that various problems related to resources, resource bounds and resource conflicts introduced by
Wooldridge et al are
\begin{enumerate}
\item W[1]-hard or co-W[1]-hard when parameterized by the size of the coalition.
\item para-NP-hard or co-para-NP-hard when parameterized by $|R|$.
\item FPT when parameterized by either $|G|$ or $|Ag|+|R|$.
\end{enumerate}

%
\end{abstract}

\newpage

\section{Introduction}
\label{introduction}

\subsection{Coalitions}
\label{coalitions} In multi-agent systems (MAS), where each agent has limited resources, the formation of coalitions of agents
is a very powerful tool~\cite{Wooldridge-book}. Coalitions enable agents to accomplish goals they may not have been able to
accomplish individually. As such, understanding and predicting the dynamics of coalitions formation, e.g., which coalitions
are more beneficial and/or more likely to emerge, is a question of considerable interest in multi-agent settings.
Unfortunately, a range of previous studies have shown that many of these problems are computationally
complex~\cite{woolridge-qcg,woolbridge-crg}. Nonetheless, as noted by Garey and Johnson~\cite{garey-johnson}, hardness
results, such as NP-completeness, should merely constitute the beginning of the research. NP-hardness just indicates that a
general solution for all instances of the problem most probably does not exist. Still, efficient solutions for important
sub-classes may well exist.

\subsection{Formal Model of Coalition Resource Games}
\label{formal-model-of-crg} The framework we use to model coalitions is the CRG model introduced in \cite{woolbridge-crg},
defined as follows. The model contains a non-empty, finite set $Ag = \{a_1,\ldots, a_n\}$ of \emph{agents}. A
\emph{coalition}, typically denoted by~$C$, is simply a set of agents, i.e., a subset of $Ag$. The \emph{grand coalition} is
the set of all agents,~$Ag$. There is also a finite set of \emph{goals}~$G$. Each agent $i\in Ag$ is associated with a
subset~$G_i$ of the goals. Agent~$i$ is \emph{satisfied} if at least one member of~$G_i$ is achieved, and unsatisfied
otherwise. Achieving the goals requires the expenditure of \emph{resources}, drawn from the total set of resource types~$R$.
Achieving different goals may require different quantities of each resource type. The quantity $\textbf{req}(g, r)$ denotes
the amount of resource~$r$ required to achieve goal~$g$. It is assumed that $\textbf{req}(g, r)$ is a non-negative integer.
Each agent is \emph{endowed} certain amounts of some or all of the resource types. The quantity $\textbf{en}(i, r)$ denotes
the amount of resource~$r$ endowed to agent~$i$. Again, it is assumed that $\textbf{en}(i, r)$ is a non-negative integer.
Formally, a \emph{Coalition Resource Game} $\Gamma$ is a $(n+5)$-tuple given by
$$ \Gamma = \langle Ag,G,R,G_1,G_2,\ldots,G_n,\textbf{en},\textbf{req} \rangle $$
where:
\begin{itemize}
  \item $Ag = \{a_1,a_2,\ldots,a_n\}$ is the set of agents
  \item $G = \{g_1,g_2,\ldots,g_m\}$ is the set of possible goals
  \item $R = \{r_1,r_2,\ldots,r_t\}$ is the set of resources
  \item For each $i\in Ag$, $G_i$ is a subset of $G$ such that any of the goals in~$G_i$ would satisfy~$i$
	  but $i$~is indifferent between the members of $G_i$
  \item \textbf{en} : $Ag \times R \rightarrow \mathbb N \cup \{0\}$ is the endowment function
  \item \textbf{req} : $G \times R \rightarrow \mathbb N \cup \{0\}$ is the requirement function
\end{itemize}
The endowment function \textbf{en} extends to coalitions by summing up endowments of its members as
\begin{center}
$ \textbf{en}(C,r) = \sum_{i\in C} \textbf{en}(i,r) \ \hspace{10mm} \forall r\in R $
\end{center}
\noindent The requirement function \textbf{req} extends to sets of goals by summing up requirements of its members as
\begin{center}
$ \textbf{req}(G',r) = \sum_{g\in G'} \textbf{req}(g,r) \ \hspace{5mm} \forall r\in R $
\end{center}
\noindent A set of goals~$G'$ \emph{satisfies} agent~$i$ if $G_i \cap G' \neq \emptyset$ and satisfies a coalition~$C$ if it
satisfies every member of~$C$. A set of goals~$G'$ is \emph{feasible} for coalition~$C$ if that coalition is endowed with
sufficient resources to achieve all goals in~$G'$, i.e., for all $r\in R$ we have $\textbf{req}(G',r)\leq \textbf{en}(C,r)$.
Finally we say that a coalition $C$ is \emph{successful} if there exists a non-empty set of goals~$G'$ that satisfies~$C$ and
is feasible for it. In general, we use the notation $ succ(C) = \{G'\ |\ G'\subseteq G,\ G'\neq \emptyset$ and $G'$ is
successful for $C\}$. The CRG models many real-world situations like the virtual organizations problem~\cite{conitzer-virtual}
and voting domains.

\section{Problem Definitions and Previous Work}

\subsection{Problems Related to Coalition Formation}
\label{summary-of-kraus} Shrot et al.~\cite{kraus} considered the following four problems related to coalitions.
\begin{enumerate}
	\item \textsc{Successful Coalition} (SC)\\
			 Instance: A CRG $\Gamma$ and a coalition~$C$\\
			 Question: Is $C$ successful?	
	\item \textsc{Exists a Successful Coalition of Size $k$} (ESCK)\\
			 Instance: A CRG $\Gamma$ and an integer~$k$\\
			 Question: Does there exist a successful coalition of size exactly~$k$?
	\item \textsc{Maximal Coalition} (MAXC)\\
			 Instance: A CRG $\Gamma$ and a coalition $C$\\
			 Question: Is every proper superset of $C$ not successful?
	\item \textsc{Maximal Successful Coalition} (MAXS)\\
			 Instance: A CRG $\Gamma$ and a coalition $C$\\
			 Question: Is $C$ successful and every proper superset of~$C$ not successful?
\end{enumerate}
The results from Shrot et al.~\cite{kraus} are summarized in Figure~\ref{shrot-table}.
\begin{figure}
\begin{center}
    \begin{tabular}{ | l | l | l | l |}
    \hline
     & \textcolor[rgb]{1.00,0.00,0.00}{SC} & \textcolor[rgb]{1.00,0.00,0.00}{ESCK} & \textcolor[rgb]{1.00,0.00,0.00}{MAXC, MAXSC}\\ \hline
    \textcolor[rgb]{1.00,0.00,0.00}{$|G|$} & FPT & FPT & FPT\\ \hline
    \textcolor[rgb]{1.00,0.00,0.00}{$|C|$} & \hspace{10mm}? & W[1]-Hard & W[1]-Hard \\ \hline
    \textcolor[rgb]{1.00,0.00,0.00}{$|R|$} & para-NP-Hard & \hspace{7mm}? & para-NP-Hard \\\hline
    \textcolor[rgb]{1.00,0.00,0.00}{$|Ag|+|R|$} & FPT & \hspace{7mm}? & FPT \\\hline
    \end{tabular}
\end{center}
\caption{Results of Shrot et al.~\cite{kraus}}
\label{shrot-table}
\end{figure}

In this work we consider the problems which were defined by Wooldridge et al.~\cite{woolbridge-crg} but were not considered by
Shrot et al.~\cite{kraus}. We define these problems in detail in the following sections.

\begin{enumerate}
\setcounter{enumi}{4}
	\item \textsc{Necessary Resource} (NR)\\
			 Instance: A CRG $\Gamma$, coalition $C$ and resource $r$\\
			 Question: Is $\textbf{req}(G',r)>0 \ \forall \ G'\in succ(C)$?
    \item \textsc{Strictly Necessary Resource} (SNR)\\
			 Instance: A CRG $\Gamma$, coalition $C$ and resource $r$\\
			 Question: Is $succ(C)\neq \emptyset$  and $\forall \ G'\in succ(C)$ we have $\textbf{req}(G',r)>0$?
    \item \textsc{$(C,G_0,r)$-Optimality} (CGRO)\\
			 Instance: A CRG $\Gamma$, coalition $C$, goal set $G_0\in succ(C)$ and resource $r$\\
			 Question: Is $\textbf{req}(G',r)\geq \textbf{req}(G_0,r) \ \forall \ G'\in succ(C)$?
    \item \textsc{R-Pareto Efficient Goal Set} (RPEGS)\\
			 Instance: A CRG $\Gamma$, coalition $C$ and a goal set $G_0$\\
			 Question: Is $G_0$ \emph{R-Pareto Efficient} for coalition $C$?
	\item \textsc{Successful Coalition With Resource Bound} (SCRB)\\
			 Instance: A CRG $\Gamma$, coalition $C$ and a resource bound~\textbf{b}\\
			 Question: Does $ \exists \ G_0\in succ(C)$ such that $G_0$ respects~\textbf{b}?
    \item \textsc{Conflicting Coalitions} (CC)\\
			 Instance: A CRG $\Gamma$, coalitions $C_1,C_2$ and a resource bound~\textbf{b}\\
			 Question: If $\forall\ G_1\in succ(C_1)$ and $\forall\ G_2\in succ(C_2)$ we have $cgs(G_1,G_2,\textbf{b})$?
\end{enumerate}

\section{Parameterized Complexity}
\label{pc-primer} We now provide a brief introduction to the key relevant concepts from the theory of parameterized
complexity. The definitions in this section are taken from \cite{flum-grohe} and~\cite{downey-skeptic}. The core idea of
parameterized complexity is to single out a specific part of the input as the parameter and ask whether the problem admits an
algorithm that is efficient in all but the parameter. In most cases the parameter is simply one of the elements of the input
(e.g.,\ the size of the goal set), but it can actually be any computable function of the input:
\begin{definition}
Let $\Sigma$ be a finite alphabet.
\begin{enumerate}
  \item  A \textbf{parametrization} of $\Sigma^{*}$
is a \emph{mapping} $\kappa : \Sigma^{*} \rightarrow \mathbb{N}$
that is \emph{polynomial time computable}.
  \item A \textbf{parameterized problem} (over $\Sigma$) is a pair $(Q, \kappa)$
consisting of a set $Q\subseteq \Sigma^{*}$ of strings over~$\Sigma$ and a parameterization~$\kappa$ of~$\Sigma^{*}$.
\end{enumerate}
\end{definition}
As stated, given a parameterized problem we seek an algorithm
that is efficient in all but the parameter. This is
captured by the notion of \emph{fixed parameter tractability}, as follows:
\begin{definition}
A parameterized problem $(Q, \kappa )$ is fixed
parameter tractable (FPT) if there exist an algorithm~$\mathbb{A}$, a
constant~$\alpha$, and a computable function~$f$, such that $\mathbb{A}$ decides
$Q$ in time $f(\kappa(x))|x|^{\alpha}$.
\end{definition}
Thus, while the fixed-parameter notion allows inefficiency in the parameter $\kappa (x)$, by means of the function~$f$, it
requires polynomial complexity in all the rest of the input. In particular, a problem that is FPT is tractable for any bounded
parameter value. While the core aim of parameterized complexity is to identify problems that are fixed-parameter tractable, it
has also developed an extensive complexity theory, allowing to prove hardness results, e.g.,\ that certain problems are (most
probably) not FPT. To this end, several parameterized complexity classes have been defined. Two of these classes are the class
W[1] and the class para-NP. We will formally define these classes shortly, but the important point to know is that there is
strong evidence to believe that both classes are not contained in FPT (much like NP is probably not contained in P). Thus,
W[1]-hard and para-NP-hard problems are most probably not fixed-parameter tractable. The class W[1] can be defined by its core
complete problem, defined as follows:
~\\

\hspace{-6mm}\textsc{\textbf{Short Nondeterministic Turing Machine Computation}}\\
Instance: A single-tape, single-head non-deterministic Turing machine~$M$, a word~$x$ and an integer~$k$\\
Question: Is there a computation of~$M$ on input~$x$ that reaches the accepting state in at most $k$~steps?\\
Parameter: $k$\\

Note that this definition is analogous to that of NP, with
the addition of the parameter~$k$.
\begin{definition}
The class \textup{W[1]} contains all parameterized problems FPT-reducible (defined hereunder) to Short-Nondeterministic-Turing-Machine-Computation.
\end{definition}
The class para-NP is defined as follows :
\begin{definition}
A parameterized problem $(Q, \kappa )$ is in para-NP
if there exists a non-deterministic Turing machine~$M$,
constant~$\alpha$ and an arbitrary computable function~$f$, such that
for any input~$x$, $M$~decides if $x\in Q$ in time $\leq |x|^{\alpha}f(\kappa(x))$.
\end{definition}
Establishing hardness results most frequently requires reductions.
In parameterized complexity, we use FPT-reduction,
defined as follows:
\begin{definition}
Let $(Q, \kappa )$ and $(Q', \kappa'
)$ be parameterized
problems over the alphabets $\Sigma$ and $\Sigma'$
respectively. An FPT-reduction (FPT many-to-one reduction) from $(Q, \kappa )$ to $(Q', \kappa'
)$ is a mapping $ R: \Sigma^{*}\rightarrow (\Sigma')^{*} $ such that:
\begin{enumerate}
  \item For all $x\in \Sigma^{*}$ we have $x\in Q \Leftrightarrow R(x)\in Q'$.
  \item $R$ is computable in time $f(\kappa (x))|x|^{\alpha}$ for some constant
$\alpha$ and an arbitrary function~$f$.
  \item There is a computable function $g : \mathbb{N} \rightarrow \mathbb{N}$ such that
$\kappa'(R(x)) \leq g(\kappa(x))$ for all $x\in \Sigma^{*}$.
\end{enumerate}
\end{definition}
Point~(1) simply states that $R$ is indeed a reduction. Point~(2)
says that it can be computed in the right amount of time
- efficient in all but the parameter. Point~(3) states that
the parameter of the image is bounded by (a function of)
that of the source. This is necessary in order to guarantee
that FPT-reductions preserve FPT-ness, i.e.\ with this
definition we obtain that if $(Q, \kappa )$ reduces to $(Q', \kappa')$ and
$(Q', \kappa')\in $ FPT then $(Q, \kappa)$ is also in FPT.

\section{Our Results \& Techniques}

We consider problems regarding resources bounds and resource conflicts which were shown to be computationally hard in
Wooldridge et al. (\cite{woolbridge-crg}) but were not considered in Shrot et al.~\cite{kraus}. We also solve three open
questions posed in Shrot et al. by showing that
 \begin{enumerate}
   \item SC parameterized by $|C|$ is \emph{W[1]-hard}
   \item ESCK parameterized by $|Ag|+|R|$ is \emph{FPT}
   \item ESCK parameterized by $|R|$ is \emph{para-NP-hard}
 \end{enumerate}
We study the complexity of NR, SNR, CGRO, RPEGS, SCRB and CC problems when parameterized by natural parameters $|G|, |C|,|R|$
and $|Ag|+|R|$. We also give a general integer program which with slight modifications for each problem shows that these
problems are FPT when parameterized by $|G|$ or $|Ag|+|R|$ (except CC parameterized by $|Ag|+|R|$ which is open). We note that
Shrot et al. showed that SC parameterized by $|R|$ is para-NP-hard. We complete this hardness result by showing that SC
parameterized by $|C|$ is W[1]-hard and thus answer their open question. Using these hardness results and via a single theme
of parameter preserving reductions we show that hardness results for all of the above problems when parameterized by $|R|$ and
$|C|$. We also show that Theorem 3.2 of Shrot et al.~\cite{kraus} is false - which claims that ESCK is FPT when parameterized
by $|G|$. We give a counterexample to their proposed algorithm and show that the problem is indeed para-NP-hard.

These results help us to understand the role of various components of the input and identify which ones actually make the
input hard. Since all the problems we considered remain intractable when parameterized by $|C|$ or $|R|$, there is no point in
trying to restrict these parameters. On the other hand, most of the problems are FPT when parameterized by $|G|$ or $|Ag|+|R|$
and thus we might enforce this restriction in real-life situations to ensure the tractability of these problems.

\begin{figure*}
\begin{center}
    \begin{tabular}{ | l | l | l | l | l | l | l | }
    \hline
     & \textcolor[rgb]{0.00,0.00,1.00}{SC} & \textcolor[rgb]{0.00,0.00,1.00}{ESCK} & \textcolor[rgb]{0.00,0.00,1.00}{NR,CGRO,RPEGS } & \textcolor[rgb]{0.00,0.00,1.00}{SNR} & \textcolor[rgb]{0.00,0.00,1.00}{SCRB} & \textcolor[rgb]{0.00,0.00,1.00}{CC}\\ \hline
     & \textcolor[rgb]{0.00,1.00,0.00}{NPC} & \textcolor[rgb]{0.00,1.00,0.00}{NPC} & \textcolor[rgb]{0.00,1.00,0.00}{co-NPC} & \textcolor[rgb]{0.00,1.00,0.00}{$D^{p}$-complete} & \textcolor[rgb]{0.00,1.00,0.00}{NPC} & \textcolor[rgb]{0.00,1.00,0.00}{co-NPC}\\ \hline
    \textcolor[rgb]{0.00,0.00,1.00}{$|G|$} & FPT & \sout{FPT} \textcolor[rgb]{1.00,0.00,0.00}{p-NP-hard} & \textcolor[rgb]{1.00,0.00,0.00}{FPT} & \textcolor[rgb]{1.00,0.00,0.00}{FPT} & \textcolor[rgb]{1.00,0.00,0.00}{FPT} & \textcolor[rgb]{1.00,0.00,0.00}{FPT}\\ \hline
    \textcolor[rgb]{0.00,0.00,1.00}{$|C|$} & \textcolor[rgb]{1.00,0.00,0.00}{W[1]-hard} & W[1]-hard & \textcolor[rgb]{1.00,0.00,0.00}{co-W[1]-hard} & \textcolor[rgb]{1.00,0.00,0.00}{W[1]-hard} & \textcolor[rgb]{1.00,0.00,0.00}{co-W[1]-hard} & \textcolor[rgb]{1.00,0.00,0.00}{co-W[1]-hard} \\\hline
    \textcolor[rgb]{0.00,0.00,1.00}{$|R|$} & p-NP-hard & \textcolor[rgb]{1.00,0.00,0.00}{p-NP-hard} & \textcolor[rgb]{1.00,0.00,0.00}{co-pNP-hard} & \textcolor[rgb]{1.00,0.00,0.00}{pNP-hard} & \textcolor[rgb]{1.00,0.00,0.00}{co-pNP-hard} & \textcolor[rgb]{1.00,0.00,0.00}{co-pNP-hard}\\\hline
\textcolor[rgb]{0.00,0.00,1.00}{$|Ag|+|R|$} & FPT & \textcolor[rgb]{1.00,0.00,0.00}{FPT} & \textcolor[rgb]{1.00,0.00,0.00}{FPT} & \textcolor[rgb]{1.00,0.00,0.00}{FPT} & \textcolor[rgb]{1.00,0.00,0.00}{FPT} & \hspace{10mm}?\\\hline
    \end{tabular}
\end{center}
\caption{Summary of results}
\label{table}
\end{figure*}

We summarize all the results in Figure~\ref{table}. The results from \cite{woolbridge-crg} are in green, from \cite{kraus} in
black and our results are in red color. We use the abbreviations NPC for NP-complete, and pNP for para-NP.

\section{Problems Left Open in Shrot et al.~\cite{kraus}}
\label{answer-to-open-problems-from-kraus}
First we show that SC parameterized by $|C|$ is W[1]-hard.

\begin{theorem}
\label{SC}
SC is W[1]-hard when parameterized by $|C|$.
\end{theorem}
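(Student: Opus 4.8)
The plan is to give an FPT-reduction from a canonical W[1]-hard graph problem; I will use \textsc{Multicolored Independent Set}: given a graph $H$ whose vertex set is partitioned into $k$ color classes $V_1,\dots,V_k$, decide whether $H$ has an independent set containing exactly one vertex from each class. This is W[1]-hard parameterized by $k$ (just like \textsc{Independent Set}/\textsc{Clique}). Given such an instance I will build a CRG $\Gamma$ together with a coalition $C$ of size exactly $k$, so that the parameter is preserved (point~(3) of the definition of FPT-reduction holds with $g$ the identity), and the other two conditions are immediate.

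The construction I have in mind: introduce one agent $a_i$ for each color class, so $C=\{a_1,\dots,a_k\}$ and $Ag=C$. For every vertex $v\in V_i$ create a goal $g_{i,v}$, and set $G_{a_i}=\{g_{i,v}\ :\ v\in V_i\}$; thus the $k$ goal sets are pairwise disjoint, which forces any satisfying goal set $G'$ to contain at least one goal per class. The resources are a ``class'' resource $t_i$ for each $i\in[k]$ and an ``edge'' resource $r_e$ for each edge $e\in E(H)$. Requirements are $\textbf{req}(g_{i,v},t_i)=1$ (and $0$ on $t_j$ for $j\neq i$) and $\textbf{req}(g_{i,v},r_e)=1$ exactly when $v\in e$, with all other requirements $0$. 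Endowments place the whole budget on $a_1$: $\textbf{en}(a_1,t_i)=1$ and $\textbf{en}(a_1,r_e)=1$ for all $i$ and $e$, everything else $0$; hence $\textbf{en}(C,t_i)=\textbf{en}(C,r_e)=1$. All values are $0/1$ integers and the instance is clearly polynomial-time computable, with $|C|=k$.

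Then I will prove the equivalence ``$C$ is successful iff $H$ has a multicolored independent set of size $k$''. For the forward direction, take a witnessing $G'$: satisfaction of $a_i$ gives some $g_{i,v_i}\in G'$, and feasibility for $t_i$ (budget $1$) forces $g_{i,v_i}$ to be the \emph{only} goal of class $i$ in $G'$, so $G'=\{g_{1,v_1},\dots,g_{k,v_k}\}$ and $S=\{v_1,\dots,v_k\}$ has exactly $k$ vertices, one per class; feasibility for each $r_e$ (budget $1$) says at most one endpoint of $e$ lies in $S$, so $S$ is independent. Conversely, from a multicolored independent set $S=\{v_1,\dots,v_k\}$ the goal set $G'=\{g_{i,v_i}:i\in[k]\}$ is nonempty, satisfies every $a_i$, uses exactly one unit of each $t_i$, and at most one unit of each $r_e$ by independence of $S$, hence is feasible; so $C$ is successful.

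The step I expect to be the crux is making the feasibility constraints capture \emph{exactly} ``choose one vertex per class forming an independent set'', and in particular handling the fact that $G'$ is a \emph{set} of goals rather than a $k$-tuple: disjointness of the $G_{a_i}$ together with the unit budgets $\textbf{en}(C,t_i)=1$ is what pins $|G'|$ to exactly $k$ and guarantees the extracted vertex set really has size $k$, after which the edge resources do nothing but forbid adjacent pairs. A secondary point to be careful about is that the reduction source must itself be W[1]-hard for the parameter $k$ — hence the multicolored variant rather than plain \textsc{Independent Set}, where distinct agents could otherwise select the same vertex and one loses control of $|G'|$. Finally, note that $|R|=k+|E(H)|$ is polynomial in the input rather than bounded by a function of $k$, which is exactly what one expects given that SC is FPT when parameterized by $|Ag|+|R|$.
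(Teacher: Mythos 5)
Your proof is correct, but it takes a genuinely different route from the paper. The paper reduces from plain \textsc{Independent Set}: it creates $k$ agents, gives \emph{each} agent its own disjoint copy of all $n$ vertex-goals, uses only one resource per edge with $\textbf{req}=k$ on incident goals and total endowment $k$, and must additionally assume the graph has no isolated vertices so that two agents picking copies of the \emph{same} vertex already blow the budget ($2k>k$ on any incident edge resource); distinctness of the selected vertices is thus enforced by the edge resources themselves. You instead start from \textsc{Multicolored Independent Set} (also W[1]-hard, so the source is fine), which lets the disjoint color classes do the work of keeping the agents' choices distinct, and you add $k$ extra ``class'' resources with unit budget to pin the witnessing goal set to exactly one goal per class, after which unit-capacity edge resources forbid adjacent pairs. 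What your approach buys: all numbers are $0/1$, no isolated-vertex preprocessing is needed, and the counting argument is cleaner (the class resources make $|G'|=k$ immediate). What the paper's approach buys: it needs no multicolored variant and no class resources, so $|R|=m$ exactly, at the cost of the weight-$k$ trick and the extra assumption. Both are valid parameter-preserving FPT-reductions with $|C|=k$, so either establishes the theorem; the only point you lean on without proof is the W[1]-hardness of the multicolored variant, which is standard and acceptable to cite.
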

\begin{proof}
We prove this by reduction from Independent Set (parameterized by size of independent set) which is a well-known W[1]-complete
problem. Let $H=(V,E)$ be a graph with $V = \{x_1,\ldots,x_n\}$ and $E = \{e_1,\ldots,e_m\}$. Let $k$ be a given integer. We
also assume that H has no isolated points as we can just add those points to the independent set and decrease the parameter
appropriately. We build a CRG $\Gamma$ as follows:
$$ \Gamma = \langle Ag,G,R,G_1,G_2,\ldots,G_k,\textbf{en},\textbf{req} \rangle $$
where\begin{itemize}
       \item $Ag = \{c_1,\ldots,c_k\}$
       \item $G_i = \{g^{1}_i,\ldots,g^{n}_i\}$ for all $i\in [k]$
       \item $G = \bigcup_{i=1}^{k} G_i$
       \item $R = \{r_1,\ldots,r_m\}$
       \item For all $i\in [k],j\in [m]$ , $\textbf{en}(c_i,r_j) = 1$
       \item For all $i\in [k],j\in [m]$ and $\ell\in [n]$, we have $\textbf{req}(g^{\ell}_i,r_j) = k$ if $e_j$ and $x_{\ell}$ are incident in $H$ and $\textbf{req}(g^{\ell}_i,r_j) = 0$ otherwise
     \end{itemize}

We claim that $H$ has an independent set of size $k$ if and only if the grand coalition $Ag$ is successful in $\Gamma$.\\

Suppose \textsc{Independent Set} answers YES, i.e., $H$ has an independent set of size $k$ say $I =
\{x_{\beta_1},\ldots,x_{\beta_k}\}$. Consider the goal set given by $G' = \{g^{\beta_1}_1,\ldots,g^{\beta_k}_k\}$. Clearly
$G'$ satisfies $Ag$ as $g^{\beta_i}_i \in G_i$ for all $i\in [k]$. Now consider any edge $e_j\in E(H)$. Let $\lambda$ be the
number of vertices from $I$ incident on $e_j$. Clearly $2\geq \lambda$ but as $I$ is independent set we have $1\geq \lambda$.
Now, for every $j\in [m]$ we have $\textbf{req}(G',r_j) = k\lambda \leq k = \textbf{en}(Ag,r_j)$. Thus $G'$ is feasible for
$Ag$. Summing up, $G'$ is successful for $Ag$ and hence SC answers YES for $C=Ag$.

Suppose now that SC answers YES for $C=Ag$. Let $G''\neq \emptyset$ be successful for $Ag$. Claim is that both
$g^{\beta}_i$ and $g^{\beta}_j$ cannot be in $G''$ if $i\neq j$. To see this, let $e_{\ell}$ be any edge incident
on $x_{\beta}$ (we had assumed earlier that graph has no isolated vertices). Then
$\textbf{req}(G'',r_{\ell}) \geq \textbf{req}(g^{\beta}_i,r_{\ell}) + \textbf{req}(g^{\beta}_i,r_{\ell}) = 2k > k = \textbf{en}(Ag,r_{\ell})$
which contradicts the fact that $G''$ is successful for $Ag$. Since $G_{i}$'s are disjoint and $G''$ is successful (hence also satisfiable) for $Ag$, we know that $G''$ contains at least one goal from each $G_i$. Also we have seen before that $g^{\beta}_i,g^{\gamma}_j \in G''$ and $i\neq j$ implies that $\beta \neq \gamma$. From each $G_i$ we pick any goal that is in $G''$. Let us call this as $G' = \{g^{\beta_1}_1,\ldots,g^{\beta_k}_k\}$. We know that $\beta_i \neq \beta_j$ when $i\neq j$. We claim that $I = \{x_{\beta_1},\ldots,x_{\beta_k}\}$ is an independent set in $H$. Suppose not and let $e_l$ be an edge between $x_{\beta_i}$ and $x_{\beta_j}$ for some $i,j\in [k]$. Then $\textbf{req}(G'',r_{\ell}) \geq \textbf{req}(G',r_{\ell}) \geq \textbf{req}(g^{\beta_i}_i,r_{\ell}) + \textbf{req}(g^{\beta_j}_j,r_{\ell}) = k + k > k = \textbf{en}(Ag,r_{\ell})$ which contradicts the fact that $G''$ is successful for $Ag$. Thus $I$ is an independent set of size $k$ in $H$ and so \textsc{Independent Set} also answers YES.\\

Note that $|Ag| = k, |G| = nk, |R| = m$ and so this reduction shows that the SC problem is W[1]-hard.
\end{proof}

We note that the SC problem can be solved in $O(|G|^{|C|}\times |R|)$ time (since we only need to check the subsets of size at
most $|C|$ of $G$) and thus SC parameterized by~$|C|$ is not para-NP-hard. Now we answer the only remaining open problem by
Shrot et al. by showing that ESCK parameterized by~$|R|$ is para-NP-hard.

\begin{theorem}
\label{esck} Checking whether there exists a successful coalition of size~$k$ (ESCK) is para-NP-hard when parameterized
by~$|R|$.
\end{theorem}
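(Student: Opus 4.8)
The plan is to reduce from \textsc{SC} parameterized by $|R|$, which Shrot et al.~\cite{kraus} proved to be para-NP-hard; since para-NP-hardness is preserved under FPT-reductions, an FPT-reduction from \textsc{SC} (parameter $|R|$) to \textsc{ESCK} (parameter $|R|$) suffices. Equivalently, the reduction below shows that \textsc{ESCK} is already NP-hard when $|R|$ is a fixed constant.

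The single idea is that an agent with an empty goal set can never be satisfied by a non-empty goal set, hence \emph{any} coalition containing such an agent is unsuccessful; this lets us force the successful coalition to be a prescribed one. Concretely, given an \textsc{SC} instance consisting of a CRG $\Gamma = \langle Ag,G,R,G_1,\ldots,G_n,\textbf{en},\textbf{req}\rangle$ and a coalition $C$ with $|C| = k \ge 1$ (the case $k=0$ being trivial), I would output the \textsc{ESCK} instance $(\Gamma',k)$, where $\Gamma'$ is obtained from $\Gamma$ by setting $G_a := \emptyset$ for every agent $a \in Ag \setminus C$ and leaving $Ag$, $G$, $R$, $\textbf{en}$, $\textbf{req}$, and $G_i$ for $i \in C$ unchanged. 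This map is computable in polynomial time and does not change $R$, so the parameter is preserved exactly.

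For correctness I would argue as follows. First, because the only data altered are the goal sets of agents outside $C$, the quantities that determine whether $C$ is successful --- namely $G$, the family $\{G_i : i \in C\}$, the value $\textbf{en}(C,\cdot)$, and $\textbf{req}$ --- are identical in $\Gamma$ and $\Gamma'$, so $succ(C)$ is the same in both games; in particular $C$ is successful in $\Gamma'$ if and only if it is successful in $\Gamma$. Second, in $\Gamma'$ every agent in $Ag \setminus C$ has an empty goal set, so no coalition meeting $Ag \setminus C$ can be successful; hence any successful coalition of $\Gamma'$ is a subset of $C$, and the only subset of $C$ of size exactly $k = |C|$ is $C$ itself. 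Combining the two observations, $\Gamma'$ has a successful coalition of size exactly $k$ if and only if $C$ is successful in $\Gamma$, which is precisely the required equivalence.

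I do not expect any real obstacle: the combinatorial heart --- that zeroing the goal sets of the agents outside $C$ forces the candidate size-$k$ coalition to be precisely $C$ while leaving the success status of $C$ untouched --- is immediate from the definitions, and the transfer of para-NP-hardness along FPT-reductions with unchanged parameter is standard. The only point requiring a little care is conceptual: \textsc{ESCK} asks for a coalition of size \emph{exactly} $k$, which is why we delete the outside agents' goals (making larger or mixed coalitions unsuccessful) rather than adding padding; and one should note that $k$ is part of the input rather than the parameter, so it is harmless that $k = |C|$ may be large.
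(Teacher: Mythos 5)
Your proof is correct and follows essentially the same route as the paper: reduce from SC parameterized by $|R|$ (para-NP-hard by Theorem 3.8 of Shrot et al.), set $k=|C|$, keep $R$ unchanged, and force the only viable size-$k$ successful coalition to be $C$ itself. The paper achieves this by deleting the agents outside $C$, while you neutralize them by setting $G_a=\emptyset$ (which the paper's CRG definition permits); this difference is immaterial and both arguments go through.
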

\begin{proof}
We prove this by reduction from SC which was shown to be para-NP-hard with respect to the parameter~$|R|$ in Theorem~3.8 of~\cite{kraus}.\\
Let $(\Gamma,C)$ be a given instance of SC. We consider an instance $(\Gamma',k)$ of ESCK
\begin{itemize}
       \item $Ag' = C $
       \item $R' = R$
       \item $G_{i}' = G_i$ for all $i\in C$
       \item $k = |C|$
\end{itemize}
We claim that SC answers YES if and only if ESCK answers YES.

Suppose SC answers YES, i.e., $C$ is a successful coalition in $\Gamma$. In $\Gamma'$ we just remove all agents not belonging
to $C$ from $\Gamma$. All the resources and the \textbf{en} and \textbf{req} functions carry over. So $C$ is a successful
coalition for $\Gamma'$ also. But we had chosen $k = |C|$ and so ESCK answers YES.

Suppose that ESCK answers YES. So there exists a successful coalition of size $k$ in $\Gamma'$. But $Ag' = C$ and we had
chosen $k = |C|$ and so the only coalition of size $k$ in $\Gamma'$ is the grand coalition $C = Ag'$. As ESCK answered YES we
know that $C$ is successful in $\Gamma'$. So it is also successful in $\Gamma$ and so SC also answers YES.

Note that $|Ag'| = k, |G'| = |G|, |R'| = |R|$ and so this reduction shows that the ESCK problem is para-NP-hard.
\end{proof}

\section{Problems Related to Resources}
\label{resources} For a coalition $C$, we recollect the notation we use: $ succ(C) = \{G'\ |\ G'\subseteq G\ ;\ G'\neq
\emptyset$ and $G'$ both satisfies $C$ and is feasible for it$\}$. In this section we show hardness results for three
different problems related to resources.

\subsection{Necessary Resource (NR)}
The idea of a \emph{necessary resource} is similar to that of a veto player in the context of conventional coalition games. A
resource is said to be \emph{necessary} if the accomplishment of any set of goals which is successful for the coalition would
need a non-zero consumption of this resource. Thus if a necessary resource is scarce then the agents possessing the resource
become important. We consider the \textsc{Necessary Resource} problem: Given a coalition $C$ and a resource $r$ answer YES if
and only if $\textbf{req}(G',r)>0$ for all $G'\in succ(C)$. NR was shown to be co-NP-complete in Wooldridge et al.
\cite{woolbridge-crg}. We note that if $C$ is not successful, then NR vacuously answers YES. We give a reduction from SC to
$\overline{NR}$.
\begin{lemma}
\label{lem:nr} Given an instance $(\Gamma,C)$ of SC we can construct an instance $(\Gamma',C',r')$ of NR such that SC answers
YES iff NR answers NO.
\end{lemma}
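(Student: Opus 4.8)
The plan is to exploit the ``vacuous YES'' convention for NR. On an instance $(\Gamma',C',r')$, NR answers NO precisely when there is some $G'\in succ_{\Gamma'}(C')$ with $\textbf{req}(G',r')=0$, and it answers YES exactly when either $succ_{\Gamma'}(C')=\emptyset$ or every $G'\in succ_{\Gamma'}(C')$ consumes $r'$. So if we engineer $\Gamma'$ so that $r'$ is a fresh resource that \emph{no goal requires} and that \emph{no agent is endowed with}, then the second disjunct can never be witnessed, and NR answers NO iff $succ_{\Gamma'}(C')\neq\emptyset$, i.e.\ iff $C'$ is successful in $\Gamma'$. Keeping $C'=C$ and leaving everything else of $\Gamma$ intact, successfulness of $C'$ in $\Gamma'$ coincides with successfulness of $C$ in $\Gamma$, which is exactly the SC question.

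Concretely I would set $\Gamma' = \langle Ag, G, R\cup\{r'\}, G_1,\dots,G_n, \textbf{en}', \textbf{req}'\rangle$, where $r'$ is a new resource, $\textbf{en}'$ agrees with $\textbf{en}$ on $R$ and has $\textbf{en}'(i,r')=0$ for every agent $i$, and $\textbf{req}'$ agrees with $\textbf{req}$ on $R$ and has $\textbf{req}'(g,r')=0$ for every $g\in G$; then take $C'=C$ and output the NR instance $(\Gamma',C',r')$. This is computable in polynomial time (copy $\Gamma$ and append an all-zero column) and $|C'|=|C|$, $|G'|=|G|$, $|R'|=|R|+1$, $|Ag'|=|Ag|$, so it is a parameter-preserving (hence FPT-) reduction with respect to each of $|C|$, $|R|$ and $|Ag|+|R|$, and trivially with respect to $|G|$.

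For correctness, the first step is to note that introducing $r'$ changes neither satisfiability nor feasibility: the sets $G_i$ are untouched, so the same goal sets satisfy $C'$ as satisfy $C$; and the only new feasibility constraint reads $\textbf{req}'(G',r')=0\le 0=\textbf{en}'(C',r')$, which holds for every $G'\subseteq G$. Hence $succ_{\Gamma'}(C')=succ_{\Gamma}(C)$. The second step is that, because $\textbf{req}'(G',r')=0$ for \emph{every} $G'\subseteq G$, the NR predicate ``$\textbf{req}'(G',r')>0$ for all $G'\in succ_{\Gamma'}(C')$'' holds if and only if $succ_{\Gamma'}(C')=\emptyset$. Combining the two: NR answers YES on $(\Gamma',C',r')$ iff $succ_{\Gamma}(C)=\emptyset$ iff $C$ is not successful in $\Gamma$; equivalently, SC answers YES iff NR answers NO, which is the claim.

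There is no genuine obstacle here; the only point requiring care is the interplay between the vacuous-YES convention and the definition of $succ$ — one must check that a fresh all-zero resource leaves $succ$ \emph{literally} unchanged, so that the NR predicate collapses to ``$succ_{\Gamma'}(C')=\emptyset$''. This directness is special to NR, whose vacuous-YES escape clause does all the work; for the related problems (SNR, CGRO, RPEGS, SCRB) that escape is unavailable, and the same template has to be enriched (e.g.\ with an extra fresh goal that forces $succ_{\Gamma'}(C')$ to be nonempty), but for NR the bare dummy resource already suffices.
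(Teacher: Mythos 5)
Your construction is correct and is essentially the paper's own reduction: both add a fresh resource $r'$ with $\textbf{req}(g,r')=0$ for every goal, so that $succ_{\Gamma'}(C)=succ_{\Gamma}(C)$ and NR's universal condition can only hold vacuously, i.e.\ NR answers NO iff $C$ is successful. The only (immaterial) difference is that you endow every agent with $0$ units of $r'$ whereas the paper uses $1$ unit; since the requirement on $r'$ is identically zero, either choice makes the new feasibility constraint trivially satisfied.
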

\begin{proof}
We keep everything the same except $R' = R \cup \{r'\}$. We extend the $\textbf{en}$ and $\textbf{req}$ functions to $r'$ by
$\textbf{en}(i,r') = 1$ for all $i\in Ag$ and $\textbf{req}(g,r') = 0$ for all $g\in G$. Now claim is that SC answers YES iff
NR answers NO.

Suppose SC answers YES. So $\exists \ G'\neq \emptyset$ such that $G'\in succ_{\Gamma}(C)$. Now $C\neq \emptyset$ and so
$\textbf{en}(C,r') > 0 = \textbf{req}(G',r') $ and thus $G'\in succ_{\Gamma'}(C)$. But $\textbf{req}(G',r') = 0$ and so NR
answers NO.

Suppose NR answers NO. So $succ_{\Gamma'}(C)\neq \emptyset$ as $\exists\ G'\in succ_{\Gamma'}(C)$ such that $G'\neq \emptyset$
and $\textbf{req}(G',r') = 0$. Now $\Gamma'$ is obtained from $\Gamma$ by only adding a new resource and so clearly $G'\in
succ_{\Gamma}(C)$. Thus SC will answer YES.
\end{proof}

\begin{theorem}
The parameterized complexity status of Necessary Resource is as follows :
\begin{itemize}
\item FPT when parameterized by $|G|$
\item co-W[1]-hard when parameterized by $|C|$
\item co-para-NP-hard when parameterized by $|R|$
\end{itemize}
\end{theorem}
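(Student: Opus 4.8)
The plan is to treat the three items separately, with the two hardness statements both falling out of Lemma~\ref{lem:nr} once we record that the reduction there is parameter-preserving.

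First, for fixed-parameter tractability with respect to $|G|$, it is convenient to work with the complement problem $\overline{NR}$, which asks whether some $G'\in succ(C)$ satisfies $\textbf{req}(G',r)=0$; since FPT is closed under complementation, an FPT algorithm for $\overline{NR}$ suffices. When $|G|$ is the parameter one can simply enumerate all $2^{|G|}$ subsets $G'\subseteq G$ and, for each, verify in time polynomial in $|\Gamma|$ that $G'\neq\emptyset$, that $G_i\cap G'\neq\emptyset$ for every $i\in C$, that $\textbf{req}(G',r'')\leq\textbf{en}(C,r'')$ for every $r''\in R$, and that $\textbf{req}(G',r)=0$; this runs in time $2^{|G|}\cdot\mathrm{poly}(|\Gamma|)$. (The same conclusion also follows from the general bounded-variable integer program mentioned in Section~5: take a $0/1$ variable $x_g$ for each $g\in G$, impose $\sum_{g}\textbf{req}(g,r'')\,x_g\leq\textbf{en}(C,r'')$ for all $r''\in R$, $\sum_{g\in G_i}x_g\geq 1$ for all $i\in C$, $\sum_{g}x_g\geq 1$, and the one extra constraint $\sum_{g}\textbf{req}(g,r)\,x_g\leq 0$; Lenstra's algorithm then decides feasibility in FPT time in the number $|G|$ of variables.)

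Next, for co-W[1]-hardness with respect to $|C|$, I would invoke Lemma~\ref{lem:nr}: it gives a polynomial-time many-one reduction from SC to $\overline{NR}$, and crucially it leaves the coalition untouched ($C'=C$), so it is trivially parameter-preserving for $|C|$ and hence an FPT-reduction. Together with Theorem~\ref{SC}, which says SC is W[1]-hard parameterized by $|C|$, this makes $\overline{NR}$ W[1]-hard parameterized by $|C|$, i.e.\ NR is co-W[1]-hard parameterized by $|C|$. For co-para-NP-hardness with respect to $|R|$ the same reduction is used; here it adds a single fresh resource, so $|R'|=|R|+1$, still bounded by a function of $|R|$. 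Since Shrot et al.\ (Theorem~3.8 of~\cite{kraus}) established that SC is NP-hard already for a constant number of resources, the images of that reduction form a slice of $\overline{NR}$ with a constant number of resources that is NP-hard; thus $\overline{NR}$ is para-NP-hard parameterized by $|R|$, whence NR is co-para-NP-hard parameterized by $|R|$.

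I do not anticipate a serious obstacle. The only points needing care are essentially bookkeeping: checking that the reduction of Lemma~\ref{lem:nr} is genuinely parameter-preserving for each parameter (exactly for $|C|$, up to $+1$ for $|R|$), and confirming that its extra resource $r'$ never affects feasibility — because $C\neq\emptyset$ and $\textbf{req}(\cdot,r')\equiv 0$ force $succ_{\Gamma'}(C)=succ_{\Gamma}(C)$ — so that the vacuous-YES convention for NR when $C$ is unsuccessful is consistent with the case ``SC answers NO'' and the biconditional of Lemma~\ref{lem:nr} holds on all instances.
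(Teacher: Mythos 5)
Your proposal matches the paper's proof: the FPT part is exactly the paper's enumeration of all $2^{|G|}$ subsets of $G$ with a polynomial-time check of membership in $succ(C)$ and of the requirement on $r$, and the two hardness claims are obtained, as in the paper, by combining Lemma~\ref{lem:nr} with Theorem~\ref{SC} and Theorem~3.8 of Shrot et al., noting the reduction preserves $|C|$ exactly and increases $|R|$ by one. The extra remarks (the ILP alternative for the FPT case and the explicit parameter bookkeeping) are correct but do not change the argument.
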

\begin{proof}
When parameterized by $|G|$, we consider all $2^{|G|}$ subsets of $G$. For each subset, we can check in polynomial time if it
is a member of $succ(C)$ and if it requires non-zero quantity of the resource given in the input.

The other two claims follow from Lemma \ref{lem:nr}, Theorem 3.8 in Shrot et al., and Theorem \ref{SC}.
\end{proof}

\subsection{Strictly Necessary Resource (SNR)}
The fact that a resource is necessary does not mean that it will be used. Because the coalition in question can be
unsuccessful and hence the resource is trivially necessary. So we have the \textsc{Strictly Necessary Resource} problem: Given
a coalition $C$ and a resource $r$ answer YES if and only if $succ(C)\neq \emptyset$  and $\forall \ G'\in succ(C)$ we have
$\textbf{req}(G',r)>0$. SNR was shown to be strongly $D^{p}$-complete in Wooldridge et al. \cite{woolbridge-crg}. To prove the
parameterized hardness results, we give a reduction from SC to SNR.
\begin{lemma}
\label{lem:snr} Given an instance $(\Gamma,C)$ of SC we can construct an instance $(\Gamma',C',r')$ of SNR such that SC
answers YES iff SNR answers YES.
\end{lemma}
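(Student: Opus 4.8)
The plan is to recycle the construction behind Lemma~\ref{lem:nr}, but to flip the behaviour of the fresh resource: instead of making $r'$ required by \emph{no} goal, I would make it required by \emph{every} goal, while keeping it so abundant that it is never a binding feasibility constraint. Concretely, set $R' = R \cup \{r'\}$, keep $Ag' = Ag$, $C' = C$, $G_i' = G_i$ for all $i$, leave $\textbf{en}$ and $\textbf{req}$ unchanged on the old resources, and extend them to $r'$ by $\textbf{req}(g,r') = 1$ for every $g\in G$ and $\textbf{en}(i,r') = |G|$ for every $i\in Ag$. Output $(\Gamma', C, r')$. The intuition is that $r'$ is plentiful enough that adding it changes nothing about which goal sets are feasible (so $succ_{\Gamma'}(C) = succ_{\Gamma}(C)$), yet it is touched by any non-empty goal set, which is exactly what forces the ``strictly necessary'' clause to hold automatically once the coalition is successful.

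The first step is to verify $succ_{\Gamma'}(C) = succ_{\Gamma}(C)$. Satisfiability is unaffected since the sets $G_i$ are untouched. For feasibility, the old constraints are identical, and for $r'$ every $G'\subseteq G$ satisfies $\textbf{req}(G',r') = |G'| \le |G| \le |C|\cdot|G| = \textbf{en}(C,r')$ (using $|C|\ge 1$), so no goal set gains or loses feasibility. Hence the two $succ$ sets coincide exactly.

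The second step is the equivalence. If SC answers YES then $succ_{\Gamma}(C)\neq\emptyset$, so $succ_{\Gamma'}(C)\neq\emptyset$; moreover every $G'\in succ_{\Gamma'}(C)$ is non-empty by definition, so $\textbf{req}(G',r') = |G'| \ge 1 > 0$. Thus both clauses of SNR hold and SNR answers YES. Conversely, if SC answers NO then $succ_{\Gamma}(C)=\emptyset$, hence $succ_{\Gamma'}(C)=\emptyset$, the first clause of SNR ($succ(C)\neq\emptyset$) fails, and SNR answers NO. Finally $|Ag'| = |Ag|$, $|G'| = |G|$ and $|R'| = |R|+1$, and the construction is clearly polynomial-time, so it is an FPT-reduction simultaneously for the parametrizations by $|C|$, by $|R|$, and by $|Ag|+|R|$; combined with Theorem~\ref{SC} and Theorem~3.8 of~\cite{kraus} this will yield the W[1]-hardness (param.\ $|C|$) and para-NP-hardness (param.\ $|R|$) claims for SNR.

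I do not anticipate a real obstacle; the one point that needs care is the size of the endowment $\textbf{en}(i,r')$. It must be at least $|G|$ — since a feasible goal set can contain up to $|G|$ goals — precisely so that $r'$ never alters which goal sets are feasible and the equality $succ_{\Gamma'}(C) = succ_{\Gamma}(C)$ holds on the nose; this is what makes both the positive and the negative instances transfer. Any value $\ge |G|$ (per agent, or even just in total over $C$) works equally well.
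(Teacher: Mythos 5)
Your construction is exactly the paper's: adjoin a fresh resource $r'$ with $\textbf{req}(g,r')=1$ for every goal and $\textbf{en}(i,r')=|G|$ for every agent, show $succ_{\Gamma}(C)=succ_{\Gamma'}(C)$, and conclude the equivalence; your argument is correct (and your contrapositive direction even fixes a small typo in the paper's last line, which writes $=\emptyset$ where $\neq\emptyset$ is meant). No further comments needed.
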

\begin{proof}
We keep everything the same except $R' = R \cup \{r'\}$. We extend the $\textbf{en}$ and $\textbf{req}$ functions to $r'$ by
$\textbf{en}(i,r') = |G|$ for all $i\in Ag$ and $\textbf{req}(g,r') = 1$ for all $g\in G$. Now claim is that SC answers YES
iff SNR answers YES.

We first show that $succ_{\Gamma}(C) = succ_{\Gamma'}(C)$. As $\Gamma'$ is obtained from $\Gamma$ by just adding one resource
and keeping everything else the same, we have $succ_{\Gamma'}(C) \subseteq succ_{\Gamma}(C)$. Now let $G_0\in
succ_{\Gamma}(C)$. Any coalition has at least one member and hence at least one $|G|$ endowment of resource $r'$. But
$\textbf{req}(G_0,r') = |G_0|\leq |G| \leq \textbf{en}(C,r')$ and so $G_0\in succ_{\Gamma'}(C)$. Summing up we have
$succ_{\Gamma}(C) = succ_{\Gamma'}(C)$.

Suppose SC answers YES. This implies $succ_{\Gamma}(C) \neq \emptyset$. So $succ_{\Gamma'}(C)= succ_{\Gamma}(C) \neq
\emptyset$. For every $G_0\in succ_{\Gamma'}(C), \textbf{req}(G_0,r') = |G_0| > 0$ as $G_0\neq \emptyset$. Therefore SNR
answers YES

Suppose SNR answers YES. So $succ_{\Gamma'}(C) \neq \emptyset$ as otherwise SNR would have said NO. Hence $succ_{\Gamma}(C) =
succ_{\Gamma'}(C) = \emptyset$ and SC so answers YES.
\end{proof}

\begin{theorem}
The parameterized complexity status of {Strictly Necessary Resource} is as follows :
\begin{itemize}
\item FPT when parameterized by $|G|$
\item W[1]-hard when parameterized by $|C|$
\item para-NP-hard when parameterized by $|R|$
\end{itemize}
\end{theorem}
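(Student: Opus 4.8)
The plan is to follow exactly the template used for Necessary Resource, replacing Lemma~\ref{lem:nr} by Lemma~\ref{lem:snr}; the one simplification is that the reduction of Lemma~\ref{lem:snr} is already YES-to-YES (SC answers YES iff SNR answers YES), so no complement classes appear and the three items come out as stated rather than as co-class variants.

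For the parameter $|G|$, I would give a brute-force algorithm: enumerate all $2^{|G|}$ subsets $G'\subseteq G$; for each, check in polynomial time whether $G'\in succ(C)$ (that is, $G'\neq\emptyset$, $G'$ satisfies every member of $C$, and $\textbf{req}(G',r)\le\textbf{en}(C,r)$ for all $r\in R$), and whether $\textbf{req}(G',r)>0$ for the input resource $r$. Output YES exactly when some enumerated subset lies in $succ(C)$ and every enumerated subset lying in $succ(C)$ has positive $r$-requirement. This runs in $f(|G|)\cdot|x|^{O(1)}$ time, so SNR is FPT when parameterized by $|G|$.

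For the parameter $|C|$, I would invoke Lemma~\ref{lem:snr}, which produces in polynomial time an instance $(\Gamma',C',r')$ of SNR with $C'=C$ such that SC answers YES iff SNR answers YES. Since $|C'|=|C|$, this is a valid FPT-reduction from SC parameterized by $|C|$ to SNR parameterized by $|C|$; as SC parameterized by $|C|$ is W[1]-hard by Theorem~\ref{SC}, so is SNR parameterized by $|C|$. For the parameter $|R|$, I would read the same reduction as an FPT-reduction from SC parameterized by $|R|$, which is para-NP-hard by Theorem~3.8 of Shrot et al.~\cite{kraus}; here the construction sets $R'=R\cup\{r'\}$, so $|R'|=|R|+1$, still a computable function of $|R|$, and equivalently NP-hardness of SC at a fixed value of $|R|$ transports to NP-hardness of SNR at a fixed value of $|R'|$, giving para-NP-hardness.

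The only thing requiring care — hardly an obstacle — is verifying that Lemma~\ref{lem:snr} respects point~(3) of the definition of FPT-reduction for each parameter ($|C'|=|C|$ exactly, $|R'|=|R|+1$) and that para-NP-hardness is unaffected by the additive constant in $|R|$; both are immediate from the construction, so the proof reduces to assembling Lemma~\ref{lem:snr}, Theorem~\ref{SC}, and Theorem~3.8 of~\cite{kraus}.
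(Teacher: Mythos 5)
Your proposal is correct and follows essentially the same route as the paper: a $2^{|G|}$ brute-force enumeration for the $|G|$ parameterization, and the hardness claims for $|C|$ and $|R|$ obtained by combining Lemma~\ref{lem:snr} with Theorem~\ref{SC} and Theorem~3.8 of Shrot et al.~\cite{kraus}. The extra care you take in checking that the reduction preserves the parameters ($|C'|=|C|$, $|R'|=|R|+1$) is implicit in the paper but entirely consistent with it.
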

\begin{proof}
When parameterized by $|G|$, we consider all $2^{|G|}$ subsets of $G$. For each subset, we can check in polynomial time if it
is a member of $succ(C)$ and if it requires non-zero quantity of the resource given in the input.

The other two claims follow from Lemma \ref{lem:snr}, Theorem 3.8 in Shrot et al., and Theorem \ref{SC}.
\end{proof}

\subsection{{$(C,G_0,r)$-Optimality} (CGRO)}
We may want to consider the issue of \emph{minimizing} usage of a particular resource. If satisfaction is the only issue, then
a coalition $C$ will be equally happy between any of the goal sets in $succ(C)$.  However in practical situations we may want
to choose a goal set among $succ(C)$ which minimizes the usage of some particular \emph{costly} resource. Thus we have the
\textsc{$(C,G_0,r)$-Optimality} problem: Given a coalition $C$, resource $r$ and a goal set $G_0\in succ(C)$ answer YES if and
only if $\textbf{req}(G',r)\geq \textbf{req}(G_0,r)$ for all $G'\in succ(C)$. CGRO was shown to be strongly co-NP-complete in
Wooldridge et al. \cite{woolbridge-crg}. To prove the parameterized hardness results, we give a reduction from SC to
$\overline{CGRO}$.
\begin{lemma}
\label{lem:cgro} Given an instance $(\Gamma,C)$ of SC we can construct an instance $(\Gamma',C',G_0,r')$ of CGRO such that SC
answers YES iff CGRO answers NO.
\end{lemma}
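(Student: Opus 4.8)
The plan is to mimic the one-resource gadgets used for Lemmas~\ref{lem:nr} and~\ref{lem:snr}, adding moreover a single new goal $g'$ that will serve as the "expensive" reference solution $G_0$. Concretely, starting from the SC instance $(\Gamma,C)$ I would set $C'=C$, keep $Ag$, $R$, and all existing values of $\textbf{en}$ and $\textbf{req}$ unchanged, put $R'=R\cup\{r'\}$ and $G'=G\cup\{g'\}$, add $g'$ to the goal set $G_i$ of every agent $i\in C$, and define $\textbf{en}(i,r')=1$ for all $i\in Ag$, $\textbf{req}(g,r')=0$ for every old goal $g\in G$, $\textbf{req}(g',r')=1$, and $\textbf{req}(g',r)=0$ for every old $r\in R$. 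Finally take $G_0=\{g'\}$ together with the distinguished resource $r'$. The first thing to verify is that this is a legal CGRO instance, i.e.\ $G_0\in succ_{\Gamma'}(C')$: the set $\{g'\}$ satisfies $C'=C$ because $g'$ was added to each $G_i$, and it is feasible since it needs one unit of $r'$ (and $\textbf{en}(C,r')=|C|\ge 1$) and zero of every old resource.

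The key structural observation is that for any $G''\subseteq G'$ we have $\textbf{req}_{\Gamma'}(G'',r')=1$ if $g'\in G''$ and $0$ otherwise, and in particular $\textbf{req}_{\Gamma'}(G_0,r')=1$. Hence CGRO answers NO precisely when some $G''\in succ_{\Gamma'}(C')$ avoids $g'$, so it suffices to prove: $succ_{\Gamma}(C)\ne\emptyset$ iff $C$ admits a successful goal set in $\Gamma'$ not containing $g'$. For the forward direction I would take $G'\in succ_{\Gamma}(C)$; since each $G_i$ only grew, $G'$ still satisfies $C$ in $\Gamma'$, and since the old goals require none of $r'$ while all old requirements and endowments are untouched, $G'$ is still feasible; and $g'\notin G'$ because $G'\subseteq G$. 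For the converse, a $g'$-free successful set $G''$ of $\Gamma'$ lies in $G$, is nonempty, still meets every $G_i$ (the only new element $g'$ is absent), and is feasible in $\Gamma$ because nothing changed on $R$, so $G''\in succ_{\Gamma}(C)$ and SC answers YES.

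I do not anticipate a real obstacle: this is a one-resource, one-goal gadget and the verification is bookkeeping of exactly the flavour of the NR and SNR lemmas. The only point that needs a little care is ensuring $g'$ is genuinely the unique "expensive" option --- this is why $\textbf{req}(g',r')$ is set strictly positive while every old goal costs $0$ of $r'$, so that "using strictly less of $r'$ than $G_0$" is literally "not using $g'$". I would also record in passing that $|C'|=|C|$, $|R'|=|R|+1$, and $|G'|=|G|+1$, so the reduction is parameter-preserving for $|C|$, for $|R|$, and (up to an additive constant) for $|G|$, which is precisely what the ensuing theorem on the parameterized complexity of CGRO requires.
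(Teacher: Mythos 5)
Your construction is essentially the paper's: it adds one new resource $r'$ and one new goal $g'$, makes every old goal cost $0$ of $r'$ and $g'$ cost a positive amount, sets $G_0=\{g'\}$, and observes that CGRO answers NO exactly when some successful goal set avoids $g'$, which happens iff SC answers YES (the paper uses $\textbf{req}(g',r')=|C|$ with endowment $1$ per member of $C$ rather than your $1$/$1$, an immaterial difference). In fact you are slightly more careful than the paper: by adding $g'$ to $G_i$ for every $i\in C$ you guarantee that $G_0=\{g'\}$ genuinely lies in $succ_{\Gamma'}(C')$ (satisfaction as well as feasibility), whereas the paper never modifies the $G_i$'s and verifies only feasibility of $G_0$, so its instance does not literally meet the requirement $G_0\in succ(C)$ in the CGRO definition.
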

\begin{proof}
Define $G' = G \cup \{g'\}$, $R' = R \cup \{r'\}$ and $C' = C$. We extend the $\textbf{en}$ to $r'$ as follows:
$\textbf{en}(i,r') = 1$ for all $i\in C$ and $\textbf{en}(i,r') = 0$ if $i\notin C$. We extend $\textbf{req}$ to $g'$ and $r'$
as follows: $\textbf{req}(g',r') = |C|$, $\textbf{req}(g',r) = 0$ for all $r\in R$ and $\textbf{req}(g,r') = 0$ for all $g\in
G$. Let $G_0 = \{g'\}$. Now claim is that SC answers YES iff CGRO answers NO.

Suppose SC answers YES. So, $\exists \ G_1\in succ_{\Gamma}(C)$.  Claim is that $G_1\in succ_{\Gamma'}(C)$ because
$\textbf{en}(C,r') = |C| > 0 = \textbf{req}(G_1,r')$ as $G_1\subseteq G$. Note also that $G_0 = \{g'\}\in succ_{\Gamma'}(C)$
as $\textbf{en}(C,r') = |C| = \textbf{req}(G_0,r')$ and for every $r\in R$, $\textbf{en}(C,r) \geq 0 = \textbf{req}(G_0,r)$.
Therefore $\textbf{req}(G_1,r') = 0 < |C| = \textbf{req}(G_0,r')$ and hence CGRO answers NO.

Suppose CGRO answers NO. So $\exists \ G_1\in succ_{\Gamma'}(C)$ such that $\textbf{req}(G_1,r') < \textbf{req}(G_0,r') =
|C|$. Claim is $g'\notin G_1$ otherwise $\textbf{req}(G_1,r')\geq \textbf{req}(g',r') = |C|$. So $G_1\subseteq G$ and we
already had $G_1\in succ_{\Gamma'}(C)$. Therefore $G_1\in succ_{\Gamma}(C)$ and so SC answers YES.
\end{proof}

\begin{theorem}
The parameterized complexity status of {$(C,G_0,r)$-Optimality} is as follows :
\begin{itemize}
\item FPT when parameterized by $|G|$
\item co-W[1]-hard when parameterized by $|C|$
\item co-para-NP-hard when parameterized by $|R|$
\end{itemize}
\end{theorem}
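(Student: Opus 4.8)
The plan is to follow exactly the three‑pronged template already used above for Necessary Resource and Strictly Necessary Resource, with Lemma~\ref{lem:cgro} doing all the work for the two hardness parts.

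For the FPT claim parameterized by $|G|$, I would enumerate all $2^{|G|}$ subsets $G'$ of $G$. For each $G'$ we can test membership in $succ(C)$ in polynomial time (check that $G'$ satisfies every agent of $C$ and that $\textbf{req}(G',r)\le\textbf{en}(C,r)$ for all $r\in R$), and the comparison of $\textbf{req}(G',r)$ with $\textbf{req}(G_0,r)$ is immediate. The given instance is a NO‑instance of CGRO exactly when some enumerated $G'\in succ(C)$ satisfies $\textbf{req}(G',r)<\textbf{req}(G_0,r)$, and a YES‑instance otherwise. This runs in time $2^{|G|}\cdot|x|^{O(1)}$, hence FPT.

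For co‑W[1]‑hardness parameterized by $|C|$, I would invoke Lemma~\ref{lem:cgro}, which maps an SC instance $(\Gamma,C)$ to a CGRO instance $(\Gamma',C',G_0,r')$ with $C'=C$ and with SC answering YES iff CGRO answers NO. Hence this same map is an FPT‑reduction from SC to $\overline{\textup{CGRO}}$ which preserves the parameter, since $|C'|=|C|$. As SC parameterized by $|C|$ is W[1]‑hard (Theorem~\ref{SC}), $\overline{\textup{CGRO}}$ parameterized by $|C|$ is W[1]‑hard, i.e.\ CGRO parameterized by $|C|$ is co‑W[1]‑hard. For co‑para‑NP‑hardness parameterized by $|R|$, the same reduction applies: Lemma~\ref{lem:cgro} adds only the single new resource $r'$, so $|R'|=|R|+1$ is bounded by a function of $|R|$, and the construction is polynomial‑time; combining with Theorem~3.8 of~\cite{kraus} (SC parameterized by $|R|$ is para‑NP‑hard) shows $\overline{\textup{CGRO}}$ parameterized by $|R|$ is para‑NP‑hard, i.e.\ CGRO parameterized by $|R|$ is co‑para‑NP‑hard.

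I do not expect a genuine obstacle: the only thing needing care is bookkeeping — confirming that the construction of Lemma~\ref{lem:cgro} is simultaneously parameter‑preserving for $|C|$ and parameter‑bounding (up to $+1$) for $|R|$, and that the produced tuple $(\Gamma',C',G_0,r')$ is actually a well‑formed CGRO instance, i.e.\ that $G_0=\{g'\}$ really lies in $succ_{\Gamma'}(C')$ (which is exactly what the proof of the lemma verifies). Given that, the theorem follows immediately.
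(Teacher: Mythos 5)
Your proposal matches the paper's proof essentially verbatim: the paper also handles the $|G|$ parameterization by enumerating all $2^{|G|}$ goal subsets and checking membership in $succ(C)$ together with the $\textbf{req}$ comparison, and derives the $|C|$ and $|R|$ hardness claims from Lemma~\ref{lem:cgro} combined with Theorem~\ref{SC} and Theorem~3.8 of Shrot et al. Your write-up just makes the parameter bookkeeping ($|C'|=|C|$, $|R'|=|R|+1$) explicit, which is fine.
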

\begin{proof}
When parameterized by $|G|$, we consider all $2^{|G|}$ subsets of $G$. For each subset, we can check in polynomial time if it
is a member of $succ(C)$ and if it requires atleast $\textbf{req}(G_0,r')$ quantity of resource $r'$ where $G_0$ and $r'$ are
given in the input.

The other two claims follow from Lemma \ref{lem:cgro}, Theorem 3.8 in Shrot et al., and Theorem \ref{SC}.
\end{proof}

\section{Problems Related to Resource Bounds}
\label{resource-bounds}
\subsection{{R-Pareto Efficient Goal Set} (RPEGS)}
We use the idea of \emph{Pareto Efficiency} to measure the optimality of a goal set w.r.t the set of all resources. In our
model we say that a goal set~$G'$ is \emph{R-Pareto Efficient} w.r.t a coalition~$C$ if no goal set in $succ_{\Gamma}(C)$
requires at most as much of every resource and strictly less of some resource. More formally we say that a goal set $G'$ is
\emph{R-Pareto Efficient} w.r.t a coalition~$C$ if and only if $\forall \ G''\in succ_{\Gamma}(C)$,
\begin{center}
$ \exists \ r_1\in R : \textbf{req}(G'',r_1)<\textbf{req}(G',r_1) \ \Rightarrow \ \exists \ r_2\in R :
\textbf{req}(G'',r_2)>\textbf{req}(G',r_2)$
\end{center}
We note that $G'$ is not necessarily in $succ(C)$. Thus we have the \textsc{R-Pareto Efficient Goal Set} problem: Given a
coalition $C$ and a goal set $G_0$ answer YES if and only if $G_0$ is \emph{R-Pareto Efficient} w.r.t $C$. Wooldridge et al.
\cite{woolbridge-crg} show that RPEGS is strongly co-NP-complete. To prove the parameterized hardness results, we give a
reduction from SC to $\overline{RPEGS}$.
\begin{lemma}
\label{lem:rpegs} Given an instance $(\Gamma,C)$ of SC we can construct an instance $(\Gamma',C',G_0)$ of RPEGS such that SC
answers YES iff RPEGS answers NO.
\end{lemma}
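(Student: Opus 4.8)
The plan is to follow the same template as the reductions for NR, SNR, and CGRO: start from an instance $(\Gamma,C)$ of SC, leave the game almost untouched, and attach one fresh resource $r'$ together with one fresh goal $g'$ (put into no agent's goal set) so that the designated goal set $G_0=\{g'\}$ is ``Pareto-dominated'' exactly when $C$ is successful. Concretely I would take $C'=C$, $R'=R\cup\{r'\}$, $G'=G\cup\{g'\}$, $G_i'=G_i$ for all $i$, keep $\textbf{en}$ and $\textbf{req}$ unchanged on the old resources and goals, and set $\textbf{en}(i,r')=0$ for every agent, $\textbf{req}(g,r')=0$ for every old goal $g$, $\textbf{req}(g',r)=\textbf{en}(C,r)$ for every old resource $r$, and $\textbf{req}(g',r')=1$; finally let $G_0=\{g'\}$. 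Since $\textbf{en}(C,r)$ is polynomial-time computable and polynomially bounded, this is a valid reduction that changes $|G|$ and $|R|$ only by an additive constant and leaves $|Ag|$ and $|C|$ unchanged, hence it is parameter-preserving for all of $|G|$, $|C|$, $|R|$, and $|Ag|+|R|$.

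The first key step is to prove $succ_{\Gamma'}(C')=succ_{\Gamma}(C)$. No successful goal set of $\Gamma'$ can contain $g'$, because $\textbf{req}(g',r')=1>0=\textbf{en}(C',r')$; therefore every member of $succ_{\Gamma'}(C')$ is a subset of $G$, and on subsets of $G$ the resource $r'$ imposes no feasibility constraint and $g'$ belongs to no $G_i$, so feasibility and satisfaction in $\Gamma'$ agree with those in $\Gamma$. The second step handles the two directions. If SC answers YES, pick any $G_1\in succ_{\Gamma'}(C')=succ_{\Gamma}(C)$; feasibility for $C$ gives $\textbf{req}(G_1,r)\le\textbf{en}(C,r)=\textbf{req}(G_0,r)$ for every $r\in R$, while $\textbf{req}(G_1,r')=0<1=\textbf{req}(G_0,r')$, so $G_1$ uses at most as much of every resource as $G_0$ and strictly less of $r'$, which by definition means $G_0$ is not R-Pareto Efficient, i.e.\ RPEGS answers NO. Conversely, if SC answers NO then $succ_{\Gamma'}(C')=succ_{\Gamma}(C)=\emptyset$, the R-Pareto Efficiency condition is satisfied vacuously, and RPEGS answers YES.

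The step that needs the most care is the choice $\textbf{req}(g',r)=\textbf{en}(C,r)$: this is precisely what forces \emph{every} feasible goal set of $C$ to consume at most as much of each old resource as $G_0$ does, and the single extra resource $r'$ is what uniformly upgrades ``at most as much'' to ``strictly less somewhere'' for all candidates $G_1$ simultaneously, so that one witness suffices. I would also double-check the boundary behaviour — that an empty $succ$ genuinely makes $G_0$ R-Pareto Efficient under the stated definition, and that placing $g'$ in none of the $G_i$ really prevents $succ$ from growing when passing from $\Gamma$ to $\Gamma'$ — and then conclude, together with Theorem~\ref{SC} and Theorem~3.8 of Shrot et al., the $|C|$- and $|R|$-hardness of RPEGS (with the complementary $|G|$ and $|Ag|+|R|$ tractability coming from enumerating all $2^{|G|}$ goal subsets, respectively from the integer-program formulation).
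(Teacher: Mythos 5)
Your construction is correct and follows essentially the same route as the paper's proof: add one fresh resource $r'$ and one fresh goal $g'$ with $G_0=\{g'\}$, show $succ_{\Gamma'}(C)=succ_{\Gamma}(C)$, let any successful $G_1$ witness that $G_0$ is not R-Pareto efficient, and use vacuity for the converse. The only difference is in the constants: the paper sets $\textbf{req}(g',r)=\infty$ on old resources and gets strict domination on every coordinate, whereas you set $\textbf{req}(g',r)=\textbf{en}(C,r)$ and concentrate the strict improvement in $r'$ --- a minor variation that, if anything, is cleaner since it avoids infinite requirement values.
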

\begin{proof}
Define $R' = R \cup \{r'\}, G' = G \cup \{g'\}$ and $C' = C$. We extend the $\textbf{en}$ to $r'$ as follows:
$\textbf{en}(i,r') = |G|$ for all $i\in C$ and $\textbf{en}(i,r') = 0$ if $i\notin C$. We extend $\textbf{req}$ to $r'$ as
follows: $\textbf{req}(g,r') = |C|$ for all $g\in G$; $\textbf{req}(g',r') = |G|\cdot|C|+1$ and $\textbf{req}(g',r) =
\infty$ for all $r\in R$. Let $G_0 = \{g'\}$. Now claim is that SC answers YES iff RPEGS answers NO.

We first show that $succ_{\Gamma}(C) = succ_{\Gamma'}(C)$. Let $G_1\in succ_{\Gamma'}(C)$. Then claim is that $g'\notin
G_1$ because otherwise for all $r\in R$ we have $\textbf{req}(G_1,r)\geq \textbf{req}(g',r) = \infty > |G|\cdot|C| =
\textbf{en}(C,r)$. Also claim is that any goal set $G_2$ in $succ_{\Gamma}(C)$ also is in $succ_{\Gamma'}(C)$. All other
things carry over from $\Gamma$ and we have additionally that $\textbf{req}(G_2,r') = |G_2|\cdot|C| \leq |G|\cdot|C| =
\textbf{en}(C,r')$ as $G_2\subseteq G$. Hence we have $succ_{\Gamma}(C) = succ_{\Gamma'}(C)$.

Suppose SC answers YES, i.e., $\exists \ G_1\in succ_{\Gamma}(C)$. As $succ_{\Gamma}(C) = succ_{\Gamma'}(C)$ we have
$G_1\in succ_{\Gamma'}(C)$. Now for every $r\in R$, $\textbf{req}(G_1,r) < \infty = \textbf{req}(G_0,r)$. Also
$\textbf{req}(G_1,r') = |G_1|\cdot|C| \leq |G|\cdot|C| < |G|\cdot|C|+1 = \textbf{req}(G_0,r')$. Therefore $G_0$ requires
strictly more of every resource in $R'$ than $G_1$ and hence RPEGS answers NO.

Suppose RPEGS answers NO. Claim is that $succ_{\Gamma'}(C)\neq \emptyset$ otherwise it would have answered YES vacuously.
As $succ_{\Gamma}(C) = succ_{\Gamma'}(C)$ we have $succ_{\Gamma}(C)\neq \emptyset$ and hence SC answers YES.
\end{proof}

\begin{theorem}
The parameterized complexity status of {R-Pareto Efficient Goal Set} is as follows :
\begin{itemize}
\item FPT when parameterized by $|G|$
\item co-W[1]-hard when parameterized by $|C|$
\item co-para-NP-hard when parameterized by $|R|$
\end{itemize}
\end{theorem}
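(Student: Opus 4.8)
The plan is to mirror the three-part structure used for NR, SNR, and CGRO: an FPT result from brute force over goal subsets, together with the two hardness results obtained by plugging Lemma~\ref{lem:rpegs} into the known hardness of SC.

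\textbf{Part 1 (FPT when parameterized by $|G|$).} I would enumerate all $2^{|G|}$ subsets $G''$ of $G$. For each candidate $G''$ I first check whether $G'' \in succ_{\Gamma}(C)$, which takes polynomial time (check nonemptiness, that $G''$ satisfies every agent in $C$, and that $\textbf{req}(G'',r)\leq \textbf{en}(C,r)$ for each $r\in R$). Then I check the violation condition for $G_0$: does $G''$ use at most as much of every resource as $G_0$ and strictly less of some resource? This is again polynomial in the input size. The goal set $G_0$ is R-Pareto Efficient precisely when no enumerated $G''\in succ_{\Gamma}(C)$ witnesses such a violation, so the whole procedure runs in $2^{|G|}\cdot \textup{poly}(|x|)$ time, giving membership in FPT for parameter $|G|$.

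\textbf{Parts 2 and 3 (hardness for $|C|$ and $|R|$).} Here I would invoke Lemma~\ref{lem:rpegs}, which gives a reduction from SC to $\overline{\textup{RPEGS}}$ (SC answers YES iff RPEGS answers NO) in which $C' = C$ and $R' = R\cup\{r'\}$, so $|C'| = |C|$ and $|R'| = |R|+1$ — both are bounded by (a function of) the original parameters, so the reduction is parameter-preserving in both senses. Composing with Theorem~\ref{SC} (SC is W[1]-hard for $|C|$) yields that $\overline{\textup{RPEGS}}$ is W[1]-hard for $|C'|$, i.e.\ RPEGS is co-W[1]-hard for $|C|$. Composing instead with Theorem~3.8 of Shrot et al.\ (SC is para-NP-hard for $|R|$) yields that $\overline{\textup{RPEGS}}$ is para-NP-hard for $|R'|$, i.e.\ RPEGS is co-para-NP-hard for $|R|$.

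\textbf{Expected main obstacle.} There is essentially no obstacle left: the substance is entirely in Lemma~\ref{lem:rpegs}, which is already proved. The only point requiring mild care is confirming that the $\infty$ entries used in that lemma are consistent with the requirement that $\textbf{req}$ be integer-valued — one should read $\infty$ as a sufficiently large integer such as $|G|\cdot|C|+1$ (or $\max_{i\in Ag,\,r\in R}\textbf{en}(i,r)\cdot|G|+1$), which is polynomially bounded and makes every goal containing $g'$ infeasible for $C'$; with that reading the argument of Lemma~\ref{lem:rpegs} goes through verbatim. Hence the theorem follows immediately by citing Lemma~\ref{lem:rpegs}, Theorem~\ref{SC}, and Theorem~3.8 of Shrot et al.\ for the two hardness parts, and the brute-force enumeration above for the FPT part.
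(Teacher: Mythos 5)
Your proposal matches the paper's own proof: brute-force enumeration over all $2^{|G|}$ goal subsets for the FPT part, and the two hardness claims obtained by composing Lemma~\ref{lem:rpegs} with Theorem~\ref{SC} and Theorem~3.8 of Shrot et al., noting the reduction preserves $|C|$ and increases $|R|$ by one. Your extra remark that the $\infty$ entries should be read as a sufficiently large integer is a sensible clarification but does not change the argument.
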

\begin{proof}
When parameterized by $|G|$, we consider all $2^{|G|}$ subsets of $G$. For each subset, we can check in polynomial time if it
is a member of $succ(C)$ and if it shows that $G_0$ is not \emph{R-Pareto Efficient}.

The other two claims follow from Lemma \ref{lem:rpegs}, Theorem 3.8 in Shrot et al., and Theorem \ref{SC}.
\end{proof}

\subsection{{Successful Coalition with Resource Bound} (SCRB)}
In real-life situations we typically have a bound on the amount of each resource. A \emph{resource bound} is a
function $\textbf{b} : R \rightarrow \mathbb{N}$ with the interpretation that each coalition has at most
$\textbf{b}(r)$ quantity of resource~$r$ for every $r\in R$. We say that a goal set $G_0$ \emph{respects} a resource
bound \textbf{b} w.r.t. a given CRG $\Gamma$ iff $\forall \ r\in R $ we have $\textbf{b}(r)\geq \textbf{req}(G_0,r)$.
Thus we have the \textsc{Successful Coalition With Resource Bound} problem: Given a coalition $C$ and a resource bound
~\textbf{b} answer YES if and only if $\exists \ G_0\in succ(C)$ such that $G_0$ respects~\textbf{b}. Wooldridge et
al. \cite{woolbridge-crg} show that SCRB is strongly NP-complete. To prove the parameterized hardness results, we give
a reduction from SC to $\overline{SCRB}$.
\begin{lemma}
\label{lem:scrb} Given an instance $(\Gamma,C)$ of SC we can construct an instance $(\Gamma',C',\textbf{b'})$ of SCRB such
that SC answers YES if and only if SCRB answers NO.
\end{lemma}
\begin{proof}
Define $R' = R \cup \{r'\}$ and $C' = C$. Let \textbf{b} be a vector with $|R'|$ components whose first $|R'|-1$
entries are 1 and the last entry is $|C|-1$, i.e., $\textbf{b} = \{1,1,\ldots,1,1, |C|-1\}$. We extend the
$\textbf{en}$ to $r'$ as follows: $\textbf{en}(i,r') = |G|$ for all $i\in C$ and $\textbf{en}(i,r') = 0$ if $i\notin
C$. We extend $\textbf{req}$ to $r'$ as follows: $\textbf{req}(g,r') = |C|$ for all $g\in G$. Now the claim is that SC
answers YES if and only if SCRB answers NO.

Suppose SC answers YES. So, there exists $G_0 \neq \emptyset$ such that $G_0 \in succ_{\Gamma}(C)$. In $\Gamma'$ we
have $\textbf{en}(C,r') = |G|\cdot|C|\geq \textbf{req}(G_0,r')$ as $G_0 \subseteq G$ and $\textbf{req}(g,r') = |C|$
for all $g\in G$. Thus $G_0 \in succ_{\Gamma'}(C)$ and so SCRB cannot vacuously answer YES. Now, for any $G''\in
succ_{\Gamma'}(C)$ such that $G''\neq \emptyset$ we have $\textbf{req}(G'',r')\geq |C| > |C| - 1 = \textbf{b}(r')$.
This means that no goal set in the non-empty set $succ_{\Gamma'}(C)$ respects \textbf{b} which implies that SCRB
answers NO.

Suppose SCRB answers NO. So $\exists \ G_0 \in succ_{\Gamma'}(C)$ such that $G_0 \neq \emptyset$ and $G_0$ respects
\textbf{b}. As $\Gamma'$ was obtained from $\Gamma$ by adding a resource and keeping everything else same, we have
$G_0\in succ_{\Gamma}(C)$ and hence SC answers YES.
\end{proof}

\begin{theorem}
The parameterized complexity status of {Successful Coalition With Resource Bound} (SCRB) is as follows:
\begin{itemize}
\item FPT when parameterized by $|G|$
\item co-W[1]-hard when parameterized by $|C|$
\item co-para-NP-hard when parameterized by $|R|$
\end{itemize}
\end{theorem}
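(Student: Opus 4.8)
The plan is to mirror exactly the three-part template already used for NR, SNR, CGRO and RPEGS, since SCRB fits the same pattern: an FPT algorithm when $|G|$ is the parameter by brute force over subsets of $G$, and hardness for the $|C|$ and $|R|$ parameters transferred through the reduction of Lemma~\ref{lem:scrb} combined with Theorem~\ref{SC} (for $|C|$) and Theorem~3.8 of Shrot et al.\ (for $|R|$).

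For the first bullet, I would argue that when parameterized by $|G|$ the problem is FPT: enumerate all $2^{|G|}$ subsets $G_0 \subseteq G$; for each one check in polynomial time whether $G_0 \in succ(C)$ (that is, $G_0 \neq \emptyset$, $G_0$ satisfies every member of $C$, and $\textbf{req}(G_0,r) \leq \textbf{en}(C,r)$ for all $r \in R$) and whether $G_0$ respects $\textbf{b}$ (that is, $\textbf{req}(G_0,r) \leq \textbf{b}(r)$ for all $r \in R$); answer YES iff some subset passes both tests. The total running time is $2^{|G|}\cdot \mathrm{poly}(|\Gamma|)$, which is FPT in $|G|$.

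For the other two bullets I would invoke Lemma~\ref{lem:scrb}, which gives a parameter-preserving reduction from SC to $\overline{\textsc{SCRB}}$ with $C' = C$ and $|R'| = |R| + 1$. Combined with Theorem~\ref{SC} (SC is W[1]-hard parameterized by $|C|$), this shows $\overline{\textsc{SCRB}}$ is W[1]-hard parameterized by $|C'| = |C|$, i.e.\ SCRB is co-W[1]-hard parameterized by $|C|$. Combined instead with Theorem~3.8 of Shrot et al.\ (SC is para-NP-hard parameterized by $|R|$), and noting $|R'| = |R|+1$ so the parameter is preserved up to an additive constant, the same reduction gives that $\overline{\textsc{SCRB}}$ is para-NP-hard parameterized by $|R|$, i.e.\ SCRB is co-para-NP-hard parameterized by $|R|$.

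There is no real obstacle here: all the heavy lifting was done in Lemma~\ref{lem:scrb}, and the proof is a two-line bookkeeping argument plus the routine brute-force FPT algorithm. The one point worth stating carefully is the direction-flipping: because the reduction sends YES instances of SC to NO instances of SCRB, the hardness results for SCRB are for the \emph{complement} problem, hence the ``co-'' prefixes in the statement; I would make sure the write-up notes this explicitly so the reader sees why co-W[1]-hard and co-para-NP-hard (rather than W[1]-hard and para-NP-hard) are the right conclusions.

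\begin{proof}
When parameterized by $|G|$, we consider all $2^{|G|}$ subsets of $G$. For each subset $G_0$, we can check in polynomial time whether $G_0 \in succ(C)$ and whether $G_0$ respects the resource bound $\textbf{b}$; we answer YES iff some subset passes both tests. This runs in time $2^{|G|}\cdot\mathrm{poly}(|\Gamma|)$ and hence SCRB is FPT when parameterized by $|G|$.

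For the remaining two claims, Lemma~\ref{lem:scrb} gives a reduction from SC to $\overline{\textsc{SCRB}}$ in which $C' = C$ and $|R'| = |R| + 1$, so both candidate parameters are preserved (the latter up to an additive constant). Composing this reduction with Theorem~\ref{SC}, which states that SC is W[1]-hard parameterized by $|C|$, shows that $\overline{\textsc{SCRB}}$ is W[1]-hard parameterized by $|C'|$; equivalently, SCRB is co-W[1]-hard parameterized by $|C|$. Composing the same reduction with Theorem~3.8 of Shrot et al., which states that SC is para-NP-hard parameterized by $|R|$, shows that $\overline{\textsc{SCRB}}$ is para-NP-hard parameterized by $|R'|$; equivalently, SCRB is co-para-NP-hard parameterized by $|R|$.
\end{proof}
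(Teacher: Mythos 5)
Your proposal is correct and follows essentially the same route as the paper: brute force over all $2^{|G|}$ subsets for the $|G|$ parameterization, and the hardness claims obtained by composing Lemma~\ref{lem:scrb} with Theorem~\ref{SC} and Theorem~3.8 of Shrot et al., with the ``co-'' prefixes coming from the YES$\leftrightarrow$NO flip in the reduction. In fact your description of the $|G|$-parameterized check (membership in $succ(C)$ plus respecting $\textbf{b}$) is stated more accurately than the paper's own text, which repeats the check from the NR case.
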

\begin{proof}
When parameterized by $|G|$, we consider all $2^{|G|}$ subsets of $G$. For each subset,we can check in polynomial time if it
is a member of $succ(C)$ and if it requires non-zero quantity of the resource given in the input.

The other two claims follow from Lemma \ref{lem:scrb}, Theorem 3.8 in Shrot et al., and Theorem \ref{SC}.
\end{proof}

\section{Problems Related to Resource Conflicts}
\label{resource-conflicts}

\subsection{{Conflicting Coalitions} (CC)}
When two or more coalitions desire to use some scarce resource, it leads to a \emph{conflict} in the system. This
issue is a classic problem in distributed and concurrent systems. In our framework we say that two goal sets are
in \emph{conflict} w.r.t a resource bound if they are individually achievable within the resource bound but their
union is not. Formally a \emph{resource bound} is a function $\textbf{b} : R \rightarrow \mathbb{N}$ with the
interpretation that each coalition has at most $\textbf{b}(r)$ quantity of resource~$r$ for every $r\in R$. We say
that a goal set $G_0$ \emph{respects} a resource bound \textbf{b} w.r.t. a given CRG $\Gamma$ if and only if
$\forall \ r\in R $ we have $\textbf{b}(r)\geq \textbf{req}(G_0,r)$. We denote by $\emph{cgs}(G_1,G_2,\textbf{b})$
the fact that $G_1$ and $G_2$ are in conflict w.r.t $\textbf{b}$. Formally, $cgs(G_1,G_2,\textbf{b})$ is defined
as $\text{respects}(G_1,\textbf{b}) \wedge \text{respects}(G_2,\textbf{b}) \wedge \neg\text{respects}(G_1\cup
G_2,\textbf{b})$. Thus we have the \textsc{Conflicting Coalitions} problem: Given coalitions $C_1,C_2$ and a
resource bound~\textbf{b} answer YES if and only if $\forall\ G_1\in succ(C_1)$ and $\forall\ G_2\in succ(C_2)$ we
have $cgs(G_1,G_2,\textbf{b})$. Wooldridge et al. \cite{woolbridge-crg} show that CC is strongly co-NP-complete.
To prove the parameterized hardness results, we give a reduction from SC to $\overline{CC}$.

\begin{lemma}
\label{lem:cc} Given an instance $(\Gamma,C)$ of SC we can construct an instance $(\Gamma',C'_1,C'_2,\textbf{b})$ of CC such
that SC answers YES if and only if CC answers NO.
\end{lemma}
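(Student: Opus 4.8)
The plan is to reduce SC to the complement of CC with a construction that barely touches the game and instead makes the ``conflict'' predicate $cgs$ uniformly false by choosing a deliberately loose resource bound. Concretely, I would take $\Gamma' = \Gamma$ (optionally adding a single dummy resource $r'$ with $\textbf{en}(i,r')$ large and $\textbf{req}(g,r')=0$ for all $g$, purely to mirror the earlier reductions; this changes nothing), set $C_1' = C_2' = C$, and define the resource bound by $\textbf{b}(r) = 1 + \sum_{g\in G}\textbf{req}(g,r)$ for every $r \in R'$. The whole point of this choice is that every subset of $G$ respects $\textbf{b}$ — in particular every member of $succ(C)$ and every union of two such sets — so no two goal sets are ever in conflict. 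Note also that the new resource, if added, does not alter feasibility, so $succ_{\Gamma'}(C) = succ_{\Gamma}(C)$.

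Next I would check both directions. Since $\text{respects}(H,\textbf{b})$ holds for all $H \subseteq G$, for any $G_1, G_2 \in succ_{\Gamma'}(C)$ the predicate $cgs(G_1,G_2,\textbf{b}) = \text{respects}(G_1,\textbf{b}) \wedge \text{respects}(G_2,\textbf{b}) \wedge \neg\text{respects}(G_1\cup G_2,\textbf{b})$ is false, its last conjunct being the negation of a true statement. Hence the universally quantified CC condition holds iff one of $succ(C_1'), succ(C_2')$ is empty, i.e.\ (as $C_1'=C_2'=C$ and $succ_{\Gamma'}(C)=succ_{\Gamma}(C)$) iff $C$ is not successful in $\Gamma$. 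Spelled out: if SC answers YES, pick $G' \in succ_{\Gamma}(C) = succ_{\Gamma'}(C)$; the witness pair $G_1 = G_2 = G'$ fails $cgs$, so CC answers NO. Conversely, if SC answers NO, then $succ_{\Gamma'}(C_1') = succ_{\Gamma'}(C_2') = \emptyset$ and the CC condition holds vacuously, so CC answers YES. For the downstream hardness results I would record that $|C_1'| = |C_2'| = |C|$, $|R'| \le |R|+1$, and $|G'| = |G|$, so the reduction is polynomial time and preserves each of the parameters $|C|$, $|R|$, $|G|$ up to an additive or multiplicative constant; combined with Theorem~\ref{SC} and Theorem~3.8 of Shrot et al.\ this gives co-W[1]-hardness of CC under $|C|$ and co-para-NP-hardness under $|R|$, while FPT membership under $|G|$ follows by brute force over the at most $4^{|G|}$ pairs $(G_1,G_2)$.

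I do not expect a genuine obstacle; the construction is essentially trivial. The only thing that needs care is the quantifier/negation bookkeeping: the CC question is a $\forall$-statement built from the three-way conjunction $cgs$, so ``CC answers NO'' must be correctly unfolded as ``there exist $G_1 \in succ(C_1')$, $G_2 \in succ(C_2')$ that are not in conflict,'' and the vacuous case — neither coalition successful, hence CC answers YES — must be stated explicitly rather than glossed over.
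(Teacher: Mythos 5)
Your proposal is correct and follows essentially the same route as the paper: set $C_1'=C_2'=C$ and choose the resource bound so loose that the $cgs$ predicate can never hold, so CC answers YES exactly when $succ(C)=\emptyset$, i.e.\ when SC answers NO. Your only deviations are cosmetic simplifications --- you drop the auxiliary resource $r'$ and use the finite bound $\textbf{b}(r)=1+\sum_{g\in G}\textbf{req}(g,r)$ in place of the paper's $\infty$ entries (arguably cleaner, since $\textbf{b}$ maps into $\mathbb{N}$), and you observe that $cgs$ fails for every pair rather than just for the witness pair $G_1=G_2=G_0$.
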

\begin{proof}
Define $R' = R \cup \{r'\}$ and $C'_1 = C = C'_2$. Let \textbf{b} be a vector with $|R'|$ components whose first $|R'|-1$
entries are $\infty$ and the last entry is $|G|\cdot|C|$, i.e., $\textbf{b} = \{\infty,\infty,\ldots,\infty,\infty,
|G|\cdot|C|\}$. We extend the $\textbf{en}$ to $r'$ as follows: $\textbf{en}(i,r') = |G|$ for all $i\in C$ and
$\textbf{en}(i,r') = 0$ if $i\notin C$. We extend $\textbf{req}$ to $r'$ as follows: $\textbf{req}(g,r') = |C|$ for all $g\in
G$. Now the claim is that SC answers YES if and only if CC answers NO.

First we claim that $succ_{\Gamma}(C) = succ_{\Gamma'}(C)$. We built $\Gamma'$ from $\Gamma$ by just adding one resource and
so clearly $succ_{\Gamma'}(C) \subseteq succ_{\Gamma}(C)$. Now let $G''\in succ_{\Gamma}(C)$. Then $\textbf{req}(G'',r') =
|G''|\cdot|C| \leq |G|\cdot|C| = \textbf{en}(C,r')$ and $G''\in succ_{\Gamma'}(C)$. Summarizing we have our claim.

Suppose SC answers YES. So, there exists $G_0 \neq \emptyset$ such that $G_0 \in succ_{\Gamma}(C)$. As
$succ_{\Gamma}(C) = succ_{\Gamma'}(C)$ we have $G_0 \in succ_{\Gamma'}(C)$. As $C'_1 = C = C'_2$ the \emph{cgs}
condition fails for $G_1 = G_0 = G_2$ and so CC answers NO.

Suppose CC answers NO. Claim is that $succ_{\Gamma'}(C) \neq \emptyset$. If not then $succ_{\Gamma'}(C'_1) =
\emptyset = succ_{\Gamma'}(C'_2)$ and in fact CC would have vacuously answered YES. But $succ_{\Gamma}(C) =
succ_{\Gamma'}(C)$ and so $succ_{\Gamma}(C)\neq \emptyset$. Thus SC answers YES.
\end{proof}

\begin{theorem}
The parameterized complexity status of {Conflicting Coalitions} (CC) is as follows :
\begin{itemize}
\item FPT when parameterized by $|G|$
\item co-W[1]-hard when parameterized by $|C|$
\item co-para-NP-hard when parameterized by $|R|$
\end{itemize}
\end{theorem}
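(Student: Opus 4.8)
The plan is to follow the same three-part template used for the preceding resource and resource-bound theorems.

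\emph{Parameter $|G|$.} First I would enumerate all $2^{|G|}$ subsets of $G$ and, for each one, check in polynomial time whether it lies in $succ(C_1)$ and whether it lies in $succ(C_2)$; each test merely verifies that the subset satisfies the relevant coalition and is feasible for it. This lists the two families $succ(C_1)$ and $succ(C_2)$ explicitly. By definition CC answers YES precisely when $cgs(G_1,G_2,\textbf{b})$ holds for every $G_1\in succ(C_1)$ and every $G_2\in succ(C_2)$, so it suffices to loop over all such pairs --- at most $2^{|G|}\cdot 2^{|G|}$ of them --- and evaluate $\text{respects}(G_1,\textbf{b})\wedge\text{respects}(G_2,\textbf{b})\wedge\neg\text{respects}(G_1\cup G_2,\textbf{b})$, which takes polynomial time per pair. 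The running time is $4^{|G|}$ times a polynomial in the input size, so CC parameterized by $|G|$ is FPT.

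\emph{Parameters $|C|$ and $|R|$.} For these I would invoke Lemma~\ref{lem:cc}, whose reduction from SC to $\overline{CC}$ produces an instance $(\Gamma',C_1',C_2',\textbf{b})$ with $C_1'=C_2'=C$ and $R'=R\cup\{r'\}$; hence $|C_1'|=|C_2'|=|C|$ and $|R'|=|R|+1$, so the reduction is parameter preserving for both $|C|$ and $|R|$. Composing it with Theorem~\ref{SC} (SC parameterized by $|C|$ is W[1]-hard) shows that CC parameterized by $|C|$ is co-W[1]-hard, and composing it with Theorem~3.8 of Shrot et al.\ (SC parameterized by $|R|$ is para-NP-hard) shows that CC parameterized by $|R|$ is co-para-NP-hard.

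The only point needing slightly more care than in the single-coalition problems is the $|G|$ case: because the CC predicate is a universal statement over \emph{pairs} of goal sets rather than a single goal set, one must observe that the number of relevant pairs is still bounded by a function of $|G|$ alone (namely $4^{|G|}$), so the exhaustive check remains FPT. Everything else is routine and parallels the proofs of the earlier theorems; I do not expect a genuine obstacle here.
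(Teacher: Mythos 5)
Your proposal is correct and follows essentially the same route as the paper: the $|G|$ case by exhaustively enumerating goal-set pairs ($4^{|G|}$ of them, which the paper phrases as considering all $2^{|G|}$ choices for each of $G_1$ and $G_2$) and checking membership in $succ(C_1)$, $succ(C_2)$ and the $cgs$ predicate in polynomial time, and the $|C|$ and $|R|$ cases by composing the parameter-preserving reduction of Lemma~\ref{lem:cc} with Theorem~\ref{SC} and Theorem~3.8 of Shrot et al., respectively.
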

\begin{proof}
When parameterized by $|G|$, we consider all $2^{|G|}$ choices for $G_1$ and $G_2$. Given a choice $(G_1,G_2)$ we can check in
polynomial time if $G_1$ and $G_2$ are members of $succ(C_1)$ and $succ(C_2)$ respectively. Also we can check the condition
$cgs(G_1,G_2,\textbf{b})$ in polynomial time.

The other two claims follow from Lemma \ref{lem:cc}, Theorem 3.8 in Shrot et al., and Theorem \ref{SC}.
\end{proof}

\section{The Parameter $|Ag|+|R|$ : Case of Bounded Agents plus Resources}
\label{ag+r} Considering the results in previous sections, we can see that even in the case that size of coalition or number
of resources is bounded the problem still remains computationally hard. So a natural question is what happens if we have a
bound on $|Ag|+|R|$ ? Can we do better if total number of agents plus resources is bounded? Shrot et.al \cite{kraus} show that
by this parameterization the problems SC, MAXC and MAXSC have FPT algorithms and they left the corresponding question for the
ESCK open. We will generalize the integer program given in Theorem 3.1 of \cite{kraus}, to give a FPT algorithm for the open
problem of Existence of Successful Coalition of size $k$ (ESCK). Then by using a similar approach we will design FPT
algorithms for the four other problems (NR, SNR, CGRO, SCRB) considered in this paper.

The integer program we define is a satisfiability problem (rather than an optimization problem). It consists
of a set of constraints, and the question is whether there exists an integral solution to this set. Consider
the following integer program (which we will name as IP):

\begin{alignat}{2}
    & \forall i \in Ag:                 & \sum_{g\in G_i} \Mb{x_g} \geq \Mb{y_i} \tag{1} \\
    & \forall r \in R:                  & \sum_{g\in G} \Mb{x_g}\times req(g,r) \leq \sum_{i\in Ag} \Mb{y_i}\times en(i,r) \tag{2} \\
    & \forall g \in G:                  & \Mb{x_g}\in \{0,1\} \notag \\
    & \forall i \in Ag:                 & \Mb{y_i}\in \{0,1\} \notag
\end{alignat}

In this setting, $\Mb{y_i}=1$, for each $i\in Ag$, represents the situation that the agent $i$ is participating in the
coalition and $\Mb{x_g}=1$, for each $g \in G$, represents the situation that goal $g$ is achieved. The first constraint
guarantees that any participating agent has at least one of his goals achieved. The second constraint ensures that the
participating agents have enough endowment to achieve all of the chosen goals. It is clear that any solution for this integer
program is a coalition of agents and a successful set of goals for that coalition.
~\\

The above integer program has $|Ag|+|R|$ constraints and in Flum and Grohe~\cite{flum-grohe} it is shown that checking
feasibility of \textsc{Integer Linear Programming} is FPT in the number of constraints or in the number of variables. Now for
each of our problems we will add some constraints to get new integer programs which solve those problems.

\begin{theorem} \label{AR:ESCK}
Checking whether there is a Successful Coalition of size $k$ (ESCK) is FPT when parameterized by $|Ag|+|R|$.
\end{theorem}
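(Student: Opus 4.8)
The plan is to reuse the integer program IP almost verbatim, adding exactly one new constraint that pins down the size of the coalition. Concretely, I would take the program IP defined above and append the linear constraint $\sum_{i\in Ag}\mathbf{y_i}=k$, together with the same Boolean domains $\mathbf{x_g}\in\{0,1\}$ for $g\in G$ and $\mathbf{y_i}\in\{0,1\}$ for $i\in Ag$. Call this program IP$_{\text{ESCK}}$. As already argued for IP, any feasible $0/1$ assignment corresponds to a coalition $C=\{i : \mathbf{y_i}=1\}$ together with a goal set $G'=\{g : \mathbf{x_g}=1\}$ such that constraint (1) makes $G'$ satisfy every member of $C$ and constraint (2) makes $G'$ feasible for $C$; the extra equality constraint forces $|C|=k$ exactly. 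Conversely, any successful coalition of size $k$ with a witnessing successful goal set yields a feasible assignment. One small point I would note explicitly: if $k=0$ the unique coalition is empty, and whether the empty coalition counts as successful should be handled as a trivial special case (the empty goal set is not allowed by the definition of $succ$, and the all-zero assignment satisfies the constraints, so one should either forbid $k=0$ or argue it separately) — but for $k\ge 1$ nonemptiness of $C$ is automatic and the correspondence is exact.

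Having established correctness, the running-time argument is immediate from the tool already cited: IP$_{\text{ESCK}}$ has $|Ag|+|R|+1$ constraints, so by the result of Flum and Grohe~\cite{flum-grohe} that feasibility of \textsc{Integer Linear Programming} is FPT in the number of constraints, the feasibility of IP$_{\text{ESCK}}$ can be decided in time $f(|Ag|+|R|)\cdot|x|^{O(1)}$. Since the parameter of the ESCK instance is $|Ag|+|R|$ and the number of constraints is bounded by $|Ag|+|R|+1$, this is an FPT algorithm in the desired parameter, and constructing the program from the CRG instance is clearly polynomial.

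I do not expect a genuine obstacle here; the theorem is essentially a one-line modification of IP. The only thing that needs care is making sure the reduction from a successful coalition of size $k$ back to a feasible IP solution is stated correctly — in particular that we may always pick a \emph{single} witnessing goal set $G'\in succ(C)$ and that the resource inequalities in constraint (2) are exactly the feasibility condition $\mathbf{req}(G',r)\le\mathbf{en}(C,r)$ summed only over the chosen agents (which is why the $\mathbf{y_i}$ multipliers appear on the right-hand side). So the ``hard part'', such as it is, is purely bookkeeping: verifying that restricting the endowment sum to $\{i : \mathbf{y_i}=1\}$ in (2) coincides with $\mathbf{en}(C,r)$ for $C$ the chosen coalition, and that $k\ge 1$ makes the nonemptiness requirement on $G'$ follow from constraint (1). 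With that checked, the theorem follows, and the same template (IP plus a handful of extra constraints) is what I would then apply to NR, SNR, CGRO and SCRB in the remaining theorems.
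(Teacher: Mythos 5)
Your proposal is correct and follows the paper's proof essentially verbatim: the paper likewise adds the single constraint $\sum_{i\in Ag}\mathbf{y_i}=k$ to the general integer program IP and invokes the Flum--Grohe result that feasibility of \textsc{Integer Linear Programming} is FPT in the number of constraints, which is $|Ag|+|R|+1$ here. Your extra remarks on the $k=0$ corner case and on why constraint (1) forces a non-empty goal set when $k\ge 1$ are fine bookkeeping observations that the paper leaves implicit.
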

\begin{proof}
For ESCK, the general integer program given above needs only one additional constraint: We have to ensure that exactly $k$
number of agents will be selected. Therefore adding the constraint $\sum_{i\in Ag} \Mb{y_i} = k$ gives us the integer program
for the problem ESCK. The number of constraints, i.e., $|Ag|+|R|+1$ for this integer program is $|Ag|+|R|+1$ and as
\textsc{Integer Linear Programming} is FPT w.r.t number of variables or constraints we have that ESCK parameterized by
$|Ag|+|R|$ is FPT.

\end{proof}

In the problems NR, SNR and CGRO the coalition $C$ is always given. So we will change the variables $\Mb{y_i}$'s to constants
where $\Mb{y_i}=1$ if $i\in C$ and 0 otherwise. We call this new integer program a \textbf{Fixed Coalition Integer Program}
(FCIP). The coalition $C$ is successful if and only if FCIP is satisfiable.

\begin{theorem} \label{AR:NR}
Checking whether the Resource $r$ is Needed for a Coalition $C$ to be Successful (NR) is FPT when
parameterized by $|Ag|+|R|$.
\end{theorem}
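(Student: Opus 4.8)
The plan is to reuse the \textbf{Fixed Coalition Integer Program} (FCIP) introduced just above: since the coalition $C$ is part of the input, we hard-wire $\Mb{y_i}=1$ for $i\in C$ and $\Mb{y_i}=0$ otherwise, so the only remaining variables are the $\Mb{x_g}$'s. Recall that the FCIP is satisfiable exactly when $C$ is successful, and that every satisfying assignment corresponds to a nonempty goal set $G'\in succ(C)$ (nonemptiness is forced by constraint~(1), since each agent of $C$ must have one of its goals selected, and conversely an unsatisfiable FCIP means $succ(C)=\emptyset$). The idea is to turn NR into a single feasibility test: NR answers NO precisely when there is some $G'\in succ(C)$ with $\textbf{req}(G',r)=0$, so we want to decide whether the FCIP has a satisfying assignment that additionally avoids the resource $r$.

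First I would augment the FCIP with the single linear constraint $\sum_{g\in G}\Mb{x_g}\times req(g,r)\le 0$, which, because all requirements are nonnegative integers, is equivalent to forcing $\textbf{req}(G',r)=0$ for the selected goal set $G'$. Call the resulting program $P$. I claim $P$ is satisfiable iff NR answers NO: a satisfying assignment of $P$ yields a nonempty $G'\in succ(C)$ with $\textbf{req}(G',r)=0$, witnessing a NO answer; conversely, any such $G'$ gives a satisfying assignment of $P$. This also covers the vacuous case: if $C$ is not successful then $succ(C)=\emptyset$, so $P$ is infeasible and the algorithm correctly outputs YES. Hence the decision procedure is: build $P$, test its feasibility, and return NO if feasible and YES otherwise.

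Finally, $P$ has $|C|+|R|+1\le |Ag|+|R|+1$ constraints (one copy of~(1) per agent of $C$, one copy of~(2) per resource, plus the new constraint), and by the result of Flum and Grohe~\cite{flum-grohe} that feasibility of \textsc{Integer Linear Programming} is FPT in the number of constraints, $P$ can be decided in time $f(|Ag|+|R|)\cdot |x|^{O(1)}$. Since all of $P$ is constructible in polynomial time and the final flip of the answer is trivial, NR parameterized by $|Ag|+|R|$ is FPT. The only thing requiring care — the main obstacle, though a mild one — is checking that the equivalence between feasibility of $P$ and a NO answer is exactly right in the boundary cases (an unsuccessful $C$, or an agent of $C$ with empty goal set), which is why the nonemptiness and vacuity bookkeeping above is spelled out rather than taken for granted; the ILP-feasibility machinery is then applied as a black box exactly as in Theorem~\ref{AR:ESCK}.
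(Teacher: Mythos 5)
Your proposal is correct and follows essentially the paper's own argument: modify the FCIP so that no goal requiring resource $r$ can be selected, test feasibility via the Flum--Grohe FPT result for ILP with few constraints, and answer YES iff the program is infeasible. The only (immaterial) difference is that you enforce the restriction with one extra constraint $\sum_{g\in G}\Mb{x_g}\cdot\textbf{req}(g,r)\le 0$, whereas the paper simply fixes $\Mb{x_g}=0$ for every goal $g$ with $\textbf{req}(g,r)>0$; both keep the constraint count bounded by a function of $|Ag|+|R|$.
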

\begin{proof}
We start with the integer program FCIP. The answer to NR is YES, if and only if in any successful subset of goals, there is at
least one goal $g$ with $\Mb{req}(g,r)>0$. So we just need to check and see if the coalition is successful by only using the
goals which do not need the resource $r$. Therefore in FCIP, for all goals $g\in G$ where $\Mb{req}(g,r)>0$ we will set the
variable $x_g$ to zero. Now the answer to NR is YES iff the resulting integer program is not satisfiable. Note that the number
of constraints is still same as previously - $|Ag|+|R|$.  As \textsc{Integer Linear Programming} is FPT wrt number of
variables or constraints we have that NR parameterized by $|Ag|+|R|$ is FPT.

\end{proof}

\begin{theorem} \label{AR:SNR}
Checking whether the Resource $r$ is Strictly Needed for a Coalition $C$ to be Successful (SNR) is FPT when
parameterized by $|Ag|+|R|$.
\end{theorem}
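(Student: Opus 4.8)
**

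The plan is to mimic the structure of the NR proof (Theorem~\ref{AR:NR}) as closely as possible, since SNR differs from NR only in the extra requirement that $succ(C)$ be non-empty. Starting from the Fixed Coalition Integer Program (FCIP) for the given coalition $C$, I first observe that the answer to SNR decomposes into two checkable conditions: (a) $C$ is successful at all, i.e.\ FCIP itself is satisfiable; and (b) every $G' \in succ(C)$ uses a strictly positive quantity of $r$, which—exactly as in the NR argument—is equivalent to saying that $C$ is \emph{not} successful using only goals $g$ with $\Mb{req}(g,r)=0$. So I would build a second integer program, call it FCIP$'$, obtained from FCIP by setting $x_g = 0$ for every $g\in G$ with $\Mb{req}(g,r)>0$, and then declare the SNR answer to be YES if and only if FCIP is satisfiable \emph{and} FCIP$'$ is not satisfiable.

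The key steps, in order, are: (1) recall FCIP and note it has $|Ag|+|R|$ constraints and is satisfiable iff $C$ is successful; (2) form FCIP$'$ by zeroing the variables $x_g$ for goals requiring $r$, again with $|Ag|+|R|$ constraints; (3) argue correctness of the equivalence ``SNR answers YES $\iff$ FCIP satisfiable and FCIP$'$ unsatisfiable''—the forward direction because a successful goal set exists but none avoids $r$, the backward direction because satisfiability of FCIP gives $succ(C)\neq\emptyset$ and unsatisfiability of FCIP$'$ forces every member of $succ(C)$ to touch $r$; (4) invoke the Lenstra-type result cited from Flum and Grohe that \textsc{Integer Linear Programming} is FPT in the number of constraints (or variables), applied twice, to conclude that both feasibility checks run in time $f(|Ag|+|R|)\cdot|x|^{O(1)}$, and hence SNR parameterized by $|Ag|+|R|$ is FPT.

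I do not expect a serious obstacle here; the only subtlety—and the step I would be most careful about—is making sure the two-sided check is logically exact, in particular that the ``$succ(C)\neq\emptyset$'' clause of SNR is captured by FCIP's satisfiability and is not accidentally swallowed by the FCIP$'$ condition (e.g.\ if \emph{every} goal requires $r$, then FCIP$'$ has no goals available and is trivially unsatisfiable, so the answer hinges entirely on whether FCIP is satisfiable, which is precisely correct). Since we are deciding a conjunction of one FPT feasibility query and one FPT infeasibility query, and FPT is closed under such finite combinations, the bound on the parameter is preserved and the theorem follows.
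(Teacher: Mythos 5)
Your proposal is correct and follows essentially the same route as the paper: first test satisfiability of FCIP to certify $succ(C)\neq\emptyset$, then zero out the variables $x_g$ for goals with $\mathbf{req}(g,r)>0$ and answer YES exactly when this restricted program is unsatisfiable, appealing twice to the Flum--Grohe result that integer programming is FPT in the number of constraints. Your explicit conjunction ``FCIP satisfiable and FCIP$'$ unsatisfiable'' is just a cleaner restatement of the paper's two-stage check, so there is nothing to add.
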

\begin{proof}
We start with the integer program FCIP. Since SNR answers NO when the coalition is not successful, we should first check if
the coalition is successful. Therefore we will check the answer to FCIP. If it is not satisfiable, then the answer for SNR
would be NO. But if FCIP is satisfiable, i.e., $succ(C)\neq \emptyset$, then we just need to check and see if the coalition is
successful by only using the goals which do not need the resource $r$. Again with the same approach as the proof of Theorem
\ref{AR:NR}, for all goals $g\in G$ where $\Mb{req}(g,r)>0$ we will set the variable $x_g$ to zero. Now the answer to SNR is
YES iff the resulting IP is not satisfiable. Note that the number of constraints is still same as previously - $|Ag|+|R|$. As
\textsc{Integer Linear Programming} is FPT w.r.t number of variables or constraints we have that SNR parameterized by
$|Ag|+|R|$ is FPT.

\end{proof}

\begin{theorem} \label{AR:CGRO}
Checking whether the successful goal set $G_0$ has optimal usage of Resource $r$ for a Coalition $C$ (CGRO) is
FPT when parameterized by $|Ag|+|R|$.
\end{theorem}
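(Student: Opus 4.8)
The plan is to follow the same template as Theorems~\ref{AR:NR} and~\ref{AR:SNR}, starting from the Fixed Coalition Integer Program FCIP in which the variables $\Mb{y_i}$ are hard-wired to the indicator of membership in the given coalition $C$ (so that FCIP is satisfiable exactly when $succ(C)\neq\emptyset$, and its satisfying assignments correspond to the members of $succ(C)$). Recall that CGRO answers YES precisely when no goal set $G'\in succ(C)$ uses strictly less of resource $r$ than $G_0$ does, i.e.\ when there is no $G'\in succ(C)$ with $\Mb{req}(G',r)<\Mb{req}(G_0,r)$. Since $G_0$ is supplied as part of the input and is promised to lie in $succ(C)$, the set $succ(C)$ is automatically non-empty, so --- unlike in the SNR case --- no separate feasibility check is needed.

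Concretely, I would append to FCIP the single linear constraint
\[
\sum_{g\in G}\Mb{x_g}\times req(g,r)\ \leq\ \Mb{req}(G_0,r)-1,
\]
which exactly captures the assertion ``the chosen goal set is successful for $C$ and consumes strictly less of $r$ than $G_0$'' (all quantities being non-negative integers, strict inequality is the same as $\leq \Mb{req}(G_0,r)-1$, and $\Mb{req}(G_0,r)$ is a fixed constant computable in polynomial time from the input). The answer to CGRO is then YES iff this augmented integer program is \emph{not} satisfiable, and NO iff it is satisfiable, since a satisfying assignment yields the required witness $G'\in succ(C)$ with $\Mb{req}(G',r)<\Mb{req}(G_0,r)$. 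The augmented program has $|Ag|+|R|+1$ constraints, which is bounded by a function of the parameter $|Ag|+|R|$; invoking the result of Flum and Grohe~\cite{flum-grohe} that checking feasibility of \textsc{Integer Linear Programming} is FPT in the number of constraints, we conclude that CGRO parameterized by $|Ag|+|R|$ is FPT.

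I do not expect any real obstacle here; the only point requiring a little care is the direction of the final equivalence: CGRO is a universally quantified statement over $succ(C)$, so its YES-instances correspond to the \emph{in}feasibility of the integer program that searches for a counterexample. Everything else is the same bookkeeping as in the NR and SNR proofs.
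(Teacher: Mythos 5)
Your proposal is correct and follows essentially the same route as the paper: augment the fixed-coalition integer program FCIP with the single constraint bounding the usage of $r$ strictly below $\Mb{req}(G_0,r)$, and answer YES exactly when the resulting program with $|Ag|+|R|+1$ constraints is infeasible, invoking the Flum--Grohe result. Your replacement of the strict inequality by $\leq \Mb{req}(G_0,r)-1$ is just a cleaner integrality bookkeeping of the same idea.
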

\begin{proof}
We start with the integer program FCIP. The limit on usage of resource $r$ is $\Mb{req}(G_0,r)$. Let $\beta = \Mb{req}(G_0,r)$
be the limit. So the answer for CGRO is YES iff there is no successful set of goals $G'$ with $\Mb{req}(G',r)<\beta$. So by
adding the constraint $\sum_{g\in G} \Mb{x_g}\times req(g,r) < \beta$ to FCIP, the answer for CGRO would be YES iff the
resulting IP is not satisfiable. Note that the number of constraints $|Ag|+|R|+1$. As \textsc{Integer Linear Programming} is
FPT w.r.t number of variables or constraints we have that CGRO parameterized by $|Ag|+|R|$ is FPT.

\end{proof}


\begin{theorem}
Checking whether a given coalition~$C$ is Successful by respecting the Resource Bound $\Mb{b}$ (SCRB) is FPT
when parameterized by $|Ag|+|R|$.
 \label{ar-scrb}
\end{theorem}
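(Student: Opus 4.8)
The plan is to follow the same template used for the previous four FPT results (Theorems~\ref{AR:ESCK}--\ref{AR:CGRO}): start from the Fixed Coalition Integer Program (FCIP), since in the SCRB problem the coalition $C$ is part of the input and hence the variables $\Mb{y_i}$ can be hard-wired to constants ($\Mb{y_i}=1$ if $i\in C$, and $0$ otherwise). Recall that FCIP is satisfiable if and only if $succ(C)\neq\emptyset$, and every satisfying assignment encodes a nonempty goal set $G'$ with $\textbf{req}(G',r)\leq\textbf{en}(C,r)$ for every $r\in R$.

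The key observation is that the resource bound $\Mb{b}$ imposes, for each resource $r\in R$, exactly the kind of linear inequality that FCIP already speaks in: $\textbf{req}(G',r)\leq\Mb{b}(r)$ becomes $\sum_{g\in G}\Mb{x_g}\times req(g,r)\leq\Mb{b}(r)$. So I would add one such constraint for every $r\in R$ to FCIP, call the resulting integer program IP$'$. Then SCRB answers YES if and only if IP$'$ is satisfiable: a satisfying assignment gives a nonempty $G_0$ that is both feasible for $C$ (from the original constraint~(2)) and respects $\Mb{b}$ (from the newly added constraints), and conversely any $G_0\in succ(C)$ respecting $\Mb{b}$ yields a satisfying assignment. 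Note that unlike NR, SNR, CGRO, the SCRB question is existential rather than universal, so here we test satisfiability directly rather than non-satisfiability.

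Finally I would count constraints: FCIP has $|Ag|+|R|$ constraints (the $|Ag|$ constraints of type~(1) and the $|R|$ constraints of type~(2)), and we add $|R|$ more, for a total of $|Ag|+2|R|$, which is still bounded by a function of $|Ag|+|R|$. Since checking feasibility of \textsc{Integer Linear Programming} is FPT in the number of constraints (Flum and Grohe~\cite{flum-grohe}), this shows SCRB parameterized by $|Ag|+|R|$ is FPT.

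The only mild subtlety is the edge case $G'=\emptyset$: an all-zero assignment trivially satisfies every inequality, so strictly speaking one should ensure the solution encodes a \emph{nonempty} goal set. As in the earlier proofs this is handled implicitly because constraints of type~(1) together with the positivity of the coalition force at least one $\Mb{x_g}=1$; if one wants to be careful one can add the single constraint $\sum_{g\in G}\Mb{x_g}\geq 1$, which does not change the constraint count asymptotically. I expect no real obstacle here — the argument is a routine adaptation of the preceding theorems, and the main (trivial) point is just recognizing that a resource bound is syntactically the same kind of object as a resource endowment.
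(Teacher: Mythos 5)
Your proposal is correct and follows essentially the same route as the paper: fix the $\Mb{y_i}$ variables according to $C$ (FCIP), add the constraint $\sum_{g\in G}\Mb{x_g}\times \textbf{req}(g,r)\leq \Mb{b}(r)$ for every $r\in R$, observe that SCRB answers YES iff the resulting program with $|Ag|+2|R|$ constraints is satisfiable, and invoke the Flum--Grohe result that ILP feasibility is FPT in the number of constraints. Your extra remark on ruling out the empty goal set (forced anyway by the type-(1) constraints since $C$ is nonempty) is a harmless refinement the paper leaves implicit.
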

\begin{proof}
We start with the integer program FCIP. Now the limit on usage of any resource $r\in R$ is $\Mb{b}(r)$. So for every resource
$r\in R$ we will bound its usage by adding the constraint $\sum_{g\in G} \Mb{x_g}\times \textbf{req}(g,r)\leq \Mb{b}(r)$ to
FCIP. Now the answer for SCRB would be YES if and only if the resulting integer program is satisfiable. Note that number of
constraints now is $|Ag| + 2|R|$ and \textsc{Integer Linear Programming} is FPT w.r.t number of variables or constraints we
have that SCRB is FPT w.r.t $|Ag|+ 2|R|$ and hence wrt $|Ag|+|R|$.

\end{proof}

\begin{theorem}
Checking whether a given goal set $G_0$ is \emph{R-Pareto Efficient} (RPEGS) is FPT when parameterized by
 $|Ag|+|R|$. \label{ar-rpegs}
\end{theorem}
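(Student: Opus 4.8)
The plan is to reuse the Fixed Coalition Integer Program (FCIP) from the previous proofs, since the coalition $C$ is part of the input to RPEGS. Recall that $G_0$ is \emph{R-Pareto Efficient} w.r.t.\ $C$ precisely when no $G''\in succ_{\Gamma}(C)$ \emph{dominates} $G_0$, i.e.\ when there is no $G''\in succ_{\Gamma}(C)$ with $\textbf{req}(G'',r)\leq \textbf{req}(G_0,r)$ for every $r\in R$ and $\textbf{req}(G'',r)<\textbf{req}(G_0,r)$ for at least one $r\in R$. Hence RPEGS answers NO if and only if such a dominating $G''$ exists, and the natural approach is to search for one via an integer program whose feasibility is decided by the FPT algorithm for \textsc{Integer Linear Programming} (FPT in the number of constraints, by Flum and Grohe).

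First I would add to FCIP, for every $r\in R$, the constraint $\sum_{g\in G}\Mb{x_g}\times \textbf{req}(g,r)\leq \textbf{req}(G_0,r)$; together with the FCIP constraints this forces any solution to be a goal set in $succ_{\Gamma}(C)$ that uses no more of any resource than $G_0$ does. It remains to enforce that the solution uses \emph{strictly} less of \emph{some} resource, which is a disjunction over $R$ and therefore cannot be expressed as a single linear constraint. Since $|R|$ is bounded by the parameter, I would simply branch over all $r^{*}\in R$: for each choice, take the integer program above and replace its $r^{*}$-constraint by $\sum_{g\in G}\Mb{x_g}\times \textbf{req}(g,r^{*})\leq \textbf{req}(G_0,r^{*})-1$, and call the result $\mathrm{IP}_{r^{*}}$. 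I claim RPEGS answers NO if and only if $\mathrm{IP}_{r^{*}}$ is satisfiable for at least one $r^{*}\in R$: a satisfying assignment yields a $G''\in succ_{\Gamma}(C)$ dominating $G_0$ (the strict inequality at $r^{*}$ in particular forces $G''\neq G_0$), and conversely any dominating $G''$ is strict at some resource $r^{*}$ and hence satisfies $\mathrm{IP}_{r^{*}}$.

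Each $\mathrm{IP}_{r^{*}}$ has at most $|Ag|+|R|+1$ constraints, so each can be decided in FPT time in $|Ag|+|R|$, and we run this at most $|R|$ times; therefore RPEGS is FPT when parameterized by $|Ag|+|R|$. The only place that needs care — and the main obstacle compared to the earlier proofs — is the embedded existential ``strictly less of some resource'' nested inside the universal quantification over $succ_{\Gamma}(C)$: it is handled not inside a single ILP but by branching over the at most $|R|$ candidate witnessing resources, which is affordable exactly because $|R|$ is part of the parameter. (Note also that, as with the co-NP-completeness proof in Wooldridge et al., nothing here requires $G_0\in succ_{\Gamma}(C)$, so no extra feasibility check for $G_0$ is needed.)
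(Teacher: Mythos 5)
Your proposal is correct and is essentially the paper's own proof: fix the coalition via FCIP, then for each candidate resource $r^{*}\in R$ form an ILP demanding $\textbf{req}(G'',r)\leq\textbf{req}(G_0,r)$ for all $r$ and strictly less at $r^{*}$, and answer YES iff all $|R|$ programs are infeasible, invoking the Flum--Grohe FPT result for ILP feasibility with few constraints. (The only quibble is the constraint count: each program has $|Ag|+2|R|$ constraints rather than $|Ag|+|R|+1$, which of course does not affect the FPT conclusion.)
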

\begin{proof}
As in the proof of Theorem~\ref{ar-scrb}, set the variables $\Mb{y_i}=1$ if $i\in C$ and $0$ otherwise. The answer for the
problem is NO if there exists a successful $G'$ such that there is a resource~$r\in R$ such that $\textbf{req}(G', r) <
\textbf{req}(G_0, r)$ and for every other resource~$r'\in R$ we have $\textbf{req}(G', r') \leq \textbf{req}(G_0, r')$. Since
$G_0$ is given, $\textbf{req}(G_0, r)$ is a constant. So we can write $|R|$ IPs, such that in the integer program for the
resource~$r$, we have the constraint $\textbf{req}(G', r) < \textbf{req}(G_0, r)$, and $|R|-1$ constraints $\textbf{req}(G',
r') \leq \textbf{req}(G_0, r')$, one for each resource $r' \neq r$. Now the answer for RPEGS would be YES iff all $|R|$
integer programs are not satisfiable. Note that the number of constraints in each of the integer programs is $|Ag| + 2|R|$ and
\textsc{Integer Linear Programming} is FPT w.r.t number of variables or constraints we have that RPEGS is FPT w.r.t
$|Ag|+2|R|$ and hence w.r.t $|Ag|+|R|$.

\end{proof}

Now we give an \textbf{integer quadratic program} for the CC problem :

\begin{alignat}{2}
    \forall i \in Ag:                 &\qquad& \sum_{g\in G_i} \Mb{x_g} &\geq \Mb{y_i} \tag{1} \\
	\forall r \in R:                  && \sum_{g\in G} \Mb{x_g}\times \textbf{req}(g,r) &\leq \sum_{i\in Ag} \Mb{y_i}\times \textbf{en}(i,r) \tag{2} \\
    \forall g \in G:                  && \Mb{x_g}&\in \{0,1\} \notag \\
    \forall i \in Ag:                 && \Mb{y_i}&\in \{0,1\} \notag
\end{alignat}

\begin{alignat}{2}
    \forall i \in Ag:                 &\qquad& \sum_{g\in G_i} \Mb{X_g} &\geq \Mb{Y_i} \tag{3} \\
	\forall r \in R:                  && \sum_{g\in G} \Mb{X_g}\times \textbf{req}(g,r) &\leq \sum_{i\in Ag} \Mb{Y_i}\times \textbf{en}(i,r) \tag{4} \\
    \forall g \in G:                  && \Mb{X_g}&\in \{0,1\} \notag \\
    \forall i \in Ag:                 && \Mb{Y_i}&\in \{0,1\} \notag
\end{alignat}

In the first sub-program, we set $y_i = 1$ if $i\in C_1$ and 0 otherwise. Then this sub-program finds a goal
set $G_1\in succ_{\Gamma}(C_1)$. The second sub-program is similar. Now we add the resource bound conditions :
$$ \forall \ r\in R \ \ \ \sum_{g\in G} \Mb{x_g}\times \textbf{req}(g,r) \leq  \textbf{b}(r)$$
$$ \forall \ r\in R \ \ \ \sum_{g\in G} \Mb{X_g}\times \textbf{req}(g,r) \leq  \textbf{b}(r)$$
$$ \exists \ r\in R \ \ s.t. \ \ \ \sum_{g\in G} \Big(\Mb{x_g + X_g - x_g\cdot X_g}\Big)\times \textbf{req}(g,r) >  \textbf{b}(r)$$
The first two conditions state that both $G_1$ and $G_2$ respect $\textbf{b}$ and the third condition says
that $G_1\cup G_2$ does not respect $\textbf{b}$. However the above program is \textbf{quadratic} due to the
last constraint and there is no known result about fixed parameter tractability for quadratic integer
programs. Hence we leave open the question about status of CC parameterized by $|Ag|+|R|$.

\section{Revisiting ESCK Parameterized by $|G|$}
Shrot et al. \cite{kraus} show in Theorem 3.2 of their paper that ESCK parameterized by $|G|$ is FPT. We first show their
proposed FPT algorithm is wrong by giving an instance when their algorithm gives incorrect answer. Then we show that in fact
the problem is para-NP-hard via a reduction from the independent set problem.

\subsection{Counterexample to the Algorithm Given in Theorem 3.2 of Shrot et al. \cite{kraus}}
The algorithm is as follows:
\begin{enumerate}
  \item For each $G'\subseteq G$
        \begin{itemize}
          \item Let $C'$ be set of all agents satisfied by $G'$
          \item If $|C'|\neq k$ , go to 1.
          \item If $G'$ is feasible for $C'$, return TRUE
        \end{itemize}
  \item return FALSE
\end{enumerate}
We give an instance $\Gamma$ where the above algorithm gives an incorrect answer. Suppose $|Ag|>k$, each agent has 1 unit of
endowment of each resource, each goal requires 0 of each resource, and $G_i = G$ for all agents $i\in Ag$. Thus each coalition
is successful and $\forall \ G'\subseteq G$ we have $C' = Ag$ which means that $|C'| = |Ag| > k$ and so the algorithm answers
NO while the correct answer is YES. Indeed by  reducing {Independent Set} to a CRG instance with $|G|=1$, we prove the
following theorem.
\begin{theorem}
\label{esck}
ESCK parameterized by~$|G|$ is para-NP-hard.
\end{theorem}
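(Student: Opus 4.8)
The plan is to establish para-NP-hardness in the usual way: pin the parameter $|G|$ to a constant and show that the resulting classical problem is already NP-hard. In fact I will push the parameter all the way down to $|G|=1$ and give a polynomial-time reduction from \textsc{Independent Set} (which is NP-complete) to ESCK restricted to one-goal instances. It is a standard fact (see \cite{flum-grohe}) that if, for some fixed value $c$ of the parameter, the slice of a parameterized problem consisting of all instances with parameter value $c$ is NP-hard as a classical problem, then the parameterized problem is para-NP-hard; applying this with $c=1$ finishes the proof. As a side remark, together with the counterexample above this shows that Theorem~3.2 of \cite{kraus} is false: ESCK parameterized by $|G|$ admits no FPT algorithm unless $\textup{P}=\textup{NP}$.

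For the reduction, given $H=(V,E)$ with $V=\{x_1,\dots,x_n\}$, $E=\{e_1,\dots,e_m\}$ and an integer $k$ (assuming WLOG $1\le k\le n$, the other cases being trivial), I would build a CRG $\Gamma'$ with agent set $Ag=\{a_1,\dots,a_n\}$ (one agent per vertex), a single goal $G=\{g\}$, resource set $R=\{r_1,\dots,r_m\}$ (one resource per edge), $G_i=\{g\}$ for every $i$, $\textbf{req}(g,r_j)=1$ for every $j$, and $\textbf{en}(a_i,r_j)=1$ if $x_i$ is an endpoint of $e_j$ and $\textbf{en}(a_i,r_j)=0$ otherwise. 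Then ask ESCK for a successful coalition of size exactly $k':=n-k$. The construction is clearly polynomial and produces $|G|=1$, so it lands in the desired slice.

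The heart of the argument is the equivalence: $H$ has an independent set of size $k$ if and only if $\Gamma'$ has a successful coalition of size exactly $k'$. Since $|G|=1$ and every $G_i=\{g\}$, the only non-empty goal set is $\{g\}$ and it satisfies every coalition, so a coalition $C$ is successful precisely when $\textbf{en}(C,r_j)\ge 1$ for every edge $e_j$, i.e.\ precisely when $\{x_i : a_i\in C\}$ is a vertex cover of $H$. Because a set of $k'=n-k$ vertices is a vertex cover exactly when its complement (of size $k$) is an independent set, one direction maps an independent set $I$ to the coalition $Ag\setminus\{a_i : x_i\in I\}$ and the other maps a successful coalition $C$ of size $k'$ to the vertex set $\{x_i : a_i\notin C\}$; the remaining checks are routine. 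I expect the only real obstacle to be spotting this encoding: since a coalition's endowment can only increase as agents are added, there is no way to make a pair of ``adjacent'' agents harmful, so one is forced to encode the \emph{complement} of the independent set via a vertex-cover feasibility condition rather than the independent set directly. After that observation, the verification is just bookkeeping.
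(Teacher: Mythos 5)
Your reduction is correct, and it follows the paper's overall strategy: show that the slice $|G|=1$ of ESCK is already NP-hard by reducing from \textsc{Independent Set}, with one agent per vertex, one resource per edge, and a single goal wanted by every agent. The only genuine difference is the gadget. You encode the \emph{complement}: unit requirements $\textbf{req}(g,r_j)=1$, endowment $1$ exactly on incident edges, and target coalition size $n-k$, so that success is equivalent to the chosen vertices forming a vertex cover. The paper instead encodes independence directly: $\textbf{en}(a_i,r_j)=0$ if $v_i$ is incident to $e_j$ and $1$ otherwise, $\textbf{req}(g,r_j)=k-1$, and target size exactly $k$; a size-$k$ coalition then has endowment $k-\lambda_j$ on $r_j$, where $\lambda_j$ is the number of chosen endpoints of $e_j$, so feasibility forces $\lambda_j\le 1$ for every edge. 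This also shows that your side remark --- that monotonicity of endowments makes it impossible to penalize adjacent agents, so one is ``forced'' to pass to the complement --- is not quite right: because ESCK fixes the coalition size exactly, swapping a non-incident agent for an incident one strictly lowers the endowment relative to the fixed threshold $k-1$, which is precisely the adjacency penalty the paper exploits. Both reductions are valid; yours has simpler numbers and leans on the standard independent-set/vertex-cover duality, while the paper's keeps the target size equal to $k$ and thereby avoids the degenerate size-$0$ coalition you would need to dispose of separately when $k=n$ (your ``WLOG $1\le k\le n$'' does not exclude it, though that case is trivially decidable by checking whether $E=\emptyset$).
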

\begin{proof}
We prove this by reduction from \textsc{Independent Set} to a CRG with $|G|=1$. Let $H=(V,E)$ be a given graph
and let $k$ be the given parameter. Let $V = \{v_1,v_2,\ldots,v_n\}$ and $E = \{e_1,e_2,\ldots,e_m\}$. We
build an instance $(\Gamma,k)$ of ESCK where
\begin{itemize}
       \item $Ag = \{a_1,a_2,\ldots,a_n\} $
       \item $R = \{r_1,r_2,\ldots,r_m\}$
       \item $G = \{g\}$
       \item $G_{i}= G \ \ \forall \ i\in Ag$
       \item $\textbf{req}(g,r_j) = k-1 \ \ \forall \ j\in [m]$
       \item $\textbf{en}(a_i,r_j) = 0$ if $v_i$ and $e_j$ are incident and 1 otherwise
\end{itemize}
We now claim that \textsc{Independent Set} answers YES if and only if ESCK answers YES.

Suppose \textsc{Independent Set} answers YES, i.e., $H$ has an independent set of size $k$ say $I =
\{v_{\beta_1},v_{\beta_2},\ldots,v_{\beta_k}\}$. Consider the following coalition of size $k$: $C =
\{a_{\beta_1},a_{\beta_2},\ldots,a_{\beta_k}\}$. Clearly the goal set $\{g\}$ is satisfying for $C$. Also, as
$I$ is independent set the number of vertices from $I$ incident on any $r_j$ is atmost 1. So $\forall\ j\in
[m]$ we have $\textbf{req}(g,r_j) = k-1 \leq \textbf{en}(C,r_j)$ and so $\{g\}$ is feasible for $C$ which
means that $C$ is successful coalition. As $|C|=k$ we have that ESCK answers YES.

Suppose that ESCK answers YES. So there exists a successful coalition of size $k$ in $\Gamma$ say $C =
\{a_{\beta_1},a_{\beta_2},\ldots,a_{\beta_k}\}$. Consider the set of vertices $I =
\{v_{\beta_1},v_{\beta_2},\ldots,v_{\beta_k}\}$ in $V$. We claim that it is an independent set. Suppose not
and let $e_j$ be an edge joining $v_{\beta_i}$ and $v_{\beta_l}$ such that $v_{\beta_l},v_{\beta_l}\in I$.
Then we have $\textbf{en}(C,r_j)\leq k-2 < k-1 = \textbf{req}(g,r_j)$ which contradicts the fact that $C$ is
successful (Since $G=\{g\}$ the only possible goal set is $\{g\}$). Therefore $I$ is an independent set and as
$|I|=k$ we have that \textsc{Independent Set} answers YES.

Note that as $|G|=1$ in our CRG $\Gamma$ and \textsc{Independent Set} is known to be NP-hard we have that ESCK parameterized
by $|G|$ is para-NP-hard.
\end{proof}

\section{Conclusions and Directions for Future Work}
We considered some of the problems regarding resources bounds and resource conflicts which were shown to be computationally
hard in \cite{woolbridge-crg} but were not considered in \cite{kraus}. We also solved 3 open questions posed in \cite{kraus}
by showing that
 \begin{enumerate}
   \item SC parameterized by $|C|$ is \emph{W[1]-hard}
   \item ESCK parameterized by $|Ag|+|R|$ is \emph{FPT}
   \item ESCK parameterized by $|R|$ is \emph{para-NP-hard}
 \end{enumerate}
We also found a bug in Theorem 3.2 of \cite{kraus} which claimed that ESCK parameterized by $|G|$ is FPT. We give a
counterexample to their algorithm and in fact show that the problem is para-NP-hard. Then for some problems related to
resources, resource bounds and resource conflicts like NR, SNR, CGRO, RPEGS, SCRB and CC we have results when parameterized by
various natural parameters like $|G|, |C|,|R|$ and $|Ag|+|R|$ (only CC parameterized by $|Ag|+|R|$ is left open).

These results help us to understand better the role of the various components of the input and identify
exactly the ones which make the input hard. Since all the problems are known to be FPT when parameterized by
$|G|$ and all of them except CC are known to be FPT when parameterized by $|Ag|+|R|$ we know that our problems
are tractable when the goal set is small. With this knowledge we can even want to enforce this restriction in
real-life situations as much as possible. On the other hand we know that all the problems we considered remain
intractable when parameterized by $|C|$ or $|R|$ and hence there is no point in trying to restrict size of
coalition or number of resources as it does not make the computation faster

The study of problems arising in coalitions of agents in multi-agents systems using the parameterized complexity paradigm was
initiated by Shrot et al.~\cite{kraus} In this paper we have tried to take a further step in this direction which we believe
is still unexplored. There are various (classically) computationally hard problems which need to be better analyzed through
the rich theory of parameterized complexity.

Both in Shrot et al.~\cite{kraus} and this paper only the CRG model has been considered. In CRG model the status of CC
parameterized by $|Ag|+|R|$ is left open. Alternatively one might consider other natural parameters like $|Ag|$ or try to
examine other models like the QCG model~\cite{woolridge-qcg} through parameterized complexity analysis.

\section{Acknowledgments}
We would like to thank Yuk Hei (Tom) Chan, Dana Nau and Kanthi Sarpatwar for helpful discussions.

\newpage
\bibliography{docsdb}

\end{document}